\documentclass[runningheads,a4paper,orivec]{llncs}

\setcounter{tocdepth}{3}

\usepackage{times}
\usepackage{soul}
\usepackage{url}
\usepackage[hidelinks]{hyperref}
\usepackage[utf8]{inputenc}
\usepackage{caption}
\usepackage{booktabs}
\usepackage{amssymb}
\usepackage[normalem]{ulem}
\usepackage{MnSymbol}
\usepackage{stmaryrd}
\usepackage{graphicx}
\usepackage{amsmath}
\usepackage{algpseudocode}
\usepackage{algorithmicx,algorithm}
\urlstyle{same}

\newcommand{\decomp}[1]{\ensuremath{\operatorname{decomp}(#1)}}
\newcommand{\com}[2]{\ensuremath{\operatorname{com}(#1,#2)}}

\numberwithin{definition}{section}
\numberwithin{proposition}{section}
\numberwithin{theorem}{section}

\begin{document}

\mainmatter  

\title{A Formal Framework for Reasoning about Agents' Independence in Self-organizing Multi-agent Systems}
\titlerunning{A Formal Framework for Reasoning about Agents' Independence in Self-organizing Multi-agent Systems}

\author{Jieting Luo\inst{1}, Beishui Liao\inst{1} \and
John-Jules Meyer\inst{2}}
\authorrunning{Jieting Luo et al.} 
%
\tocauthor{Jieting Luo, Beishui Liao and John-Jules Meyer}
\institute{Zhejiang University, Hangzhou, Zhejiang Province, China\\
\email{\{luojieting,baiseliao\}@zju.edu.cn}
\and
Utrecht University, Utrecht, the Netherlands\\
\email{J.J.C.Meyer@uu.nl}
}

\maketitle
\bibliographystyle{sigproc}

\begin{abstract}
Self-organization is a process where a stable pattern is formed by the cooperative behavior between parts of an initially disordered system without external control or influence. It has been introduced to multi-agent systems as an internal control process or mechanism to solve difficult problems spontaneously. However, because a self-organizing multi-agent system has autonomous agents and local interactions between them, it is difficult to predict the behavior of the system from the behavior of the local agents we design. This paper proposes a logic-based framework of self-organizing multi-agent systems, where agents interact with each other by following their prescribed local rules. The dependence relation between coalitions of agents regarding their contributions to the global behavior of the system is reasoned about from the structural and semantic perspectives. We show that the computational complexity of verifying such a self-organizing multi-agent system is in exponential time. We then combine our framework with graph theory to decompose a system into different coalitions located in different layers, which allows us to verify agents' full contributions more efficiently. The resulting information about agents' full contributions allows us to understand the complex link between local agent behavior and system level behavior in a self-organizing multi-agent system. Finally, we show how we can use our framework to model a constraint satisfaction problem.

\keywords{Self-organization, logic, Multi-agent Systems, Graph Theory, Verification}
\end{abstract}

\section{Introduction}
In the modern society artificial intelligence has been applied in many industries such as health care, retail and traffic. Since our nature presents beautiful ways of solving problems to us, biological insights have been the source of inspiration for the development of several techniques and methods to solve complex engineering problems. One of the examples is the adoption of self-organization from complex systems. Self-organization is a process where a stable pattern is formed by the cooperative behavior between parts of an initially disordered system without external control or influence. It has been introduced to multi-agent systems as an internal control process or mechanism to solve difficult problems spontaneously, especially if the system is operated in an open environment thereby having no perfect and a priori design to be guaranteed \cite{wang2002self}\cite{picard2005etto}\cite{valentini2014self}. One typical example using self organization mechanisms is ant colony optimization \cite{dorigo2006ant}, where ants collaborate to find out the optimal solution through laying down pheromone trails. In a wireless mobile sensor network, robots with sensors can deploy themselves to achieve optimal sensing coverage when the system designer is not aware of robots' interest or the operated environment \cite{khelil2016esa}. 

However, making a self-organizing system is highly challenging \cite{gorodetskii2012self}\cite{ye2016survey}. The traditional development of a multi-agent system is a top-down process which starts from the specification of the goal that the system needs to achieve to the development of specific agents. In this way, the goal of the system will guarantee to be achieved if the specific agents are implemented successfully. As the hierarchical in Figure.\ref{comparision}(left), the global objective can be achieved by modules E and F, and module E can be refined by agents A and B and module F can be refined by agents B, C and D. However, such an approach cannot be applied to the development of self-organizing multi-agent systems: because a self-organizing multi-agent system has autonomous agents and local interactions between them, the development of a self-organizing multi-agent system is usually a bottom-up process which starts from defining local components to examining global behavior. The distributive structure in Figure.\ref{comparision}(right) represents a self-organizing multi-agent system, where agents A, B, C and D interact with each other. Because of that, it is difficult to predict the behavior of the system from the system specification about autonomous agents and local interactions between them. Consequently, a self-organizing multi-agent system is usually evaluated through implementation. In other words, the complex link between local agent behavior and system level behavior in a self-organizing multi-agent system makes implementation the usual way of correctness evaluation. There have been some methodologies for developing self-organizing multi-agent systems (such as ADELFE \cite{bernon2003tools}\cite{bernon2002adelfe}), but they do not explain the complex link between local agent behavior and system level behavior of a self-organizing multi-agent system. The community of Self-Adaptive and Self-Organizing Systems (SASO) also highlights that it is still challenging to investigate how micro-level behavior lead to desirable macro-level outcomes. What if we can understand the complex link between micro-level agent behavior and macro-level system behavior in a self-organizing multi-agent system? If there is approach available that helps us understand how global system behavior emerges from agents' local interactions, the development of self-organizing multi-agent systems can be facilitated. For example, if we know a coalition of agents brings about a property independently and we want to change that property, what we have to do is to reconfigure the behavior of that coalition instead of the behavior of agents outside that coalition. As we can see, the independent relation between coalitions of agents in self-organizing multi-agent systems is an crucial issue to be investigated. 

\begin{figure}
  \centering
    \includegraphics[width=0.85\textwidth]{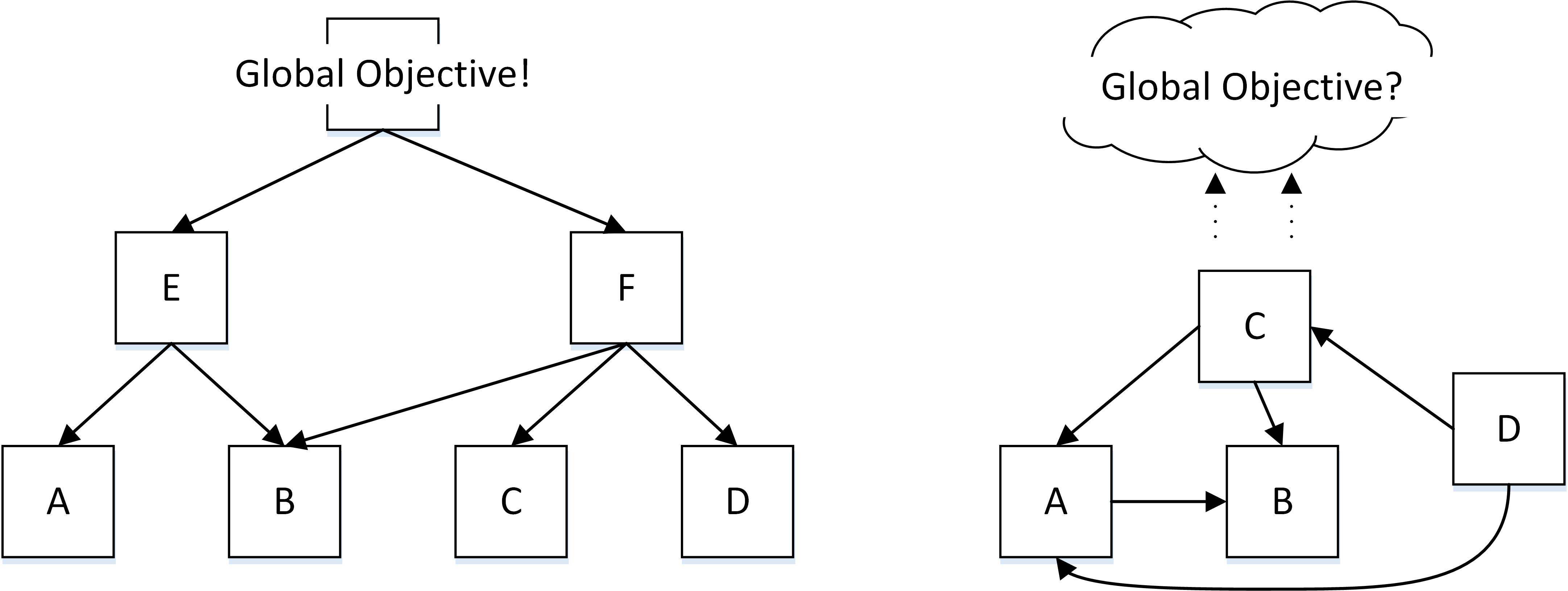}
    \caption{A comparison between top-down approach (left) and bottom-up approach (right).}\label{comparision}
\end{figure}

In theoretical computer science, logic has been used for proving the correctness of a system \cite{clarke2018model}\cite{clarke1986automatic}. Instead of implementing the system with respect to a specification, we can verify whether the system specification fulfills the global objective by checking logical formulas. That indeed provides a new way of evaluating a self-organizing multi-agent system apart from implementation. In this paper, we propose a logic-based framework of self-organizing multi-agent systems, where agents interact with each other by following their prescribed local rules. Based on the local rules, we define a structural property called independent component, a coalition of agents which do not get input from agents outside the coalition. Our semantics and the structure derived from communication between agents allow us not only to verify behavior of the system, but also to reason about the independence relation between coalitions of agents regarding their contributions to the global system behavior from two perspectives. Moreover, we propose a layered approach to decompose a self-organizing multi-agent system into different coalitions, which allows us to check agents' full contributions more efficiently. The resulting information about agents' full contributions allows us to understand the complex link between local agent behavior and system level behavior in a self-organizing multi-agent system. We finally show how we can use our framework to model a constraint satisfaction problem, where a solution based on self-organization is used.

The rest of the paper is organized as follows: Section 2 introduces the abstract framework to represent a self-organizing multi-agent system, proposing the notion of independent components; Section 3 provides the semantics of our framework to reason about agents' independence in terms of their contributions to the global behavior of the system; the model-checking problem is investigated in Section 4; Section 5 proposes a layered approach to decompose a self-organizing multi-agent system; in Section 6 we show how to use our framework to model constraint satisfaction problems; finally, related work and conclusion are provided in Sections 7 and 8 respectively.

\section{Abstract Framework}\label{abstractframework}
In this section, we will propose the model of this paper: a concurrent game structure extended with local rules and define the structural property of independent components whose behavior is independent on the behavior of the agents outside of the components.

\subsection{Self-organizing Multi-agent Systems}
The semantic structure of this paper is concurrent game structures (CGSs). It is basically a model where agents can simultaneously choose actions that collectively bring the system from the current state to a successor state. Compared to other kripke models of transaction systems, each transition in a CGS is labeled with collective actions and the agents who perform those actions. Moreover, we treat actions as first-class entities instead of using choices that are identified by their possible outcomes. Formally, 
\begin{definition}
A concurrent game structure is a tuple $\mathcal{S}=(k,Q,\pi,\Pi,ACT,d,\delta)$ such that:
\begin{itemize}
\item A natural number $k \geq 1$ of agents, and the set of all agents is $\Sigma = \{1,\ldots,k\}$; we use $A$ to denote a coalition of agents $A \subseteq \Sigma$;
\item A finite set $Q$ of states;
\item A finite set $\Pi$ of propositions;
\item A labeling function $\pi$ which maps each state $q \in Q$ to a subset of propositions which are true at $q$; thus, for each $q \in Q$ we have $\pi(q) \subseteq \Pi$;
\item A finite set $ACT$ of actions;
\item For each agent $i \in \Sigma$ and a state $q \in Q$, $d_i(q) \subseteq ACT$ is the non-empty set of actions available to agent $i$ in $q$; $D(q)=d_1(q) \times \ldots \times d_k(q)$ is the set of joint actions in $q$; given a state $q \in Q$, an action vector is a tuple $\langle \alpha_1, \ldots, \alpha_k \rangle$ such that $\alpha_i \in d_i(q)$;
\item A function $\delta$ which maps each state $q \in Q$ and a joint action $\langle \alpha_1, \ldots, \alpha_k \rangle \in D(q)$ to another state that results from state $q$ if each agent adopted the action in the action vector, thus for each $q \in Q$ and each $\langle \alpha_1, \ldots, \alpha_k \rangle \in D(q)$ we have $\delta (q, \langle \alpha_1, \ldots, \alpha_k \rangle)\in Q$.
\end{itemize}
\end{definition}

Note that the model is deterministic: the same update function adopted in the same state will always result in the same resulting state. A computation over $\mathcal{S}$ is an infinite sequence $\lambda = q_0,q_1,q_2,\ldots$ of states such that for all positions $i \geq 0$, there is a joint action $\langle \alpha_1, \ldots, \alpha_k \rangle \in D(q_i)$ such that $\delta(q_i, \langle \alpha_1, \ldots, \alpha_k \rangle)=q_{i+1}$. For a computation $\lambda$ and a position $i \geq 0$, we use $\lambda[i]$ to denote the $i$th state of $\lambda$.  More elaboration of concurrent game structures can be found in \cite{alur2002alternating}. 

Self-organization has been introduced into multi-agent systems for a long time to solve various problems in multi-agent systems \cite{ye2016survey}\cite{gorodetskii2012selfI}. It is a mechanism or a process which enables a system to finish a difficult task by the cooperative behavior between agents spontaneously \cite{di2005self}. In particular, agents in a self-organizing multi-agent system have local view of the system and the system reaches a desired state spontaneously without guided by any externals. In this paper, we argue that the cooperative behavior is guided by prescribed local rules that agents are supposed to follow with communication between agents as a prerequisite. Therefore, we can define a self-organizing multi-agent system as a concurrent game structure together with a set of local rules for agents to follow. For example, in ant colony optimization algorithms, ants are required to record their positions and the quality of their solutions (lay down pheromones) so that in later simulation iterations more ants locate better solutions. Before defining such a type of local rules, we first define what to communicate, which is given by an internal function.
\begin{definition}[Internal Functions]
Given a concurrent game structure $\mathcal{S}$, the internal function of an agent $i$ is a function $m_i: Q \to \mathcal{L}_{prop}$ that maps a state $q \in Q$ to a propositional formula over $\pi(q)$.
\end{definition}
The internal function returns the information that is provided by participating agents themselves at a given state and might be different from agent to agent. Depending on the application, we might have different interpretation on $m_i(q)$. For example, vehicles in a busy traffic situation are required to communicate their urgencies, and robots sensors in a self-deploy sensing network are required to communicate their sensing areas. Here we assume that agents are wise enough to process their local rules with the communicated information. A local rule is defined based on agents' communication as follows:
\begin{definition}[Abstract Local Rules]\label{alr}
Given a concurrent game structure $\mathcal{S}$, an abstract local rule for an agent $a$ is a tuple $\langle \tau_a, \gamma_a \rangle$ consisting of a function $\tau_a(q)$ that maps a state $q \in Q$ to a subset of agents, that is, $\tau_a(q) \subseteq \Sigma$, and a function $\gamma_a(M(q))$ that maps $M(q)=\{m_i(q) \mid i \in \tau_a(q)\}$ to an action available in state $q$ to agent $a$, that is, $\gamma_a(M(q)) \in d_a(q)$. We denote the set of all the abstract local rules as $\Gamma$ and a subset of abstract local rules as $\Gamma_A$ that are designed for coalition of agents $A$. 
\end{definition}
An abstract local rule consists of two parts: the first part $\tau_a(q)$ states the agents with whom agent $a$ is supposed to communicate in state $q$, and the second part $\gamma_a$ states the action that agent $a$ is supposed to take given the communication result with agents in $\tau_a(q)$ for their internals. Moreover, in \cite{alur2002alternating} and \cite{Knobbout2012Reasoning}, a rule (or a norm) is defined as a mapping $\gamma(q)$ that explicitly prescribes what agents need to do in a given state, which requires that system designers have complete information of the system including agents' internals such that the desired state as well as the legal computations can be identified. Differently, a local rule in this paper is defined based on agents' communication. Different participating agents might have different internal functions, making the communication results and thus the actions that are required to take different. Hence, the system allows agents to find out the desired state and how to get there in a self-organizing way. We see local rules not only as constraints but also guidance on agents' behavior, namely an agent does not know what to do if he does not communicate with other agents. Therefore, we exclude the case where agents get no constraint from their respective local rules. Notation $out$ denotes a set of computations and $out(q, \Gamma_A)$ is the set of computations starting from state $q$ where agents in coalition $A$ follow their respective local rules in $\Gamma_A$. A computation $\lambda=q_0,q_1,q_2,...$ is in $out(q_0, \Gamma_A)$ if and only if it holds that for all positions $i \geq 0$ there is a move vector $\langle \alpha_1, \ldots, \alpha_k \rangle \in D(\lambda[i])$ such that $\delta (\lambda[i], \langle \alpha_1, \ldots, \alpha_k \rangle)=\lambda[i+1]$ and for all $a \in A$ it is the case that $\alpha_a = \gamma_a(M(q))$. Because function $\gamma_a$ returns only an action that agent $a$ is allowed to take in any state, there will be only one computation in the set of computations $out(q, \Gamma_{\Sigma})$, which will be denoted as $\lambda^*(q)$ without the curly brackets outside in the rest of the paper. Now we are ready to define a self-organizing multi-agent system. Formally,
\begin{definition}[Self-organizing Multi-agent Systems]
A self-organizing multi-agent system (SOMAS) is a tuple $(\mathcal{S},\Gamma)$, where $\mathcal{S}$ is a concurrent game structure and $\Gamma$ is a set of local rules for agents in the system to follow.
\end{definition}
We will use the example in \cite{wooldridge2005obligations} for better understanding the above definitions.
\begin{example}
We consider a CGS scenario as Fig.\ref{two_trains} where there are two trains, each controlled by an agent, going through a tunnel from the opposite side. The tunnel has only room for one train, and the agents can either wait or go. Starting from state $q_0$, if the agents choose to go simultaneously, the trains will crash, which is state $q_4$; if one agent goes and the other waits, they can both successfully go through the tunnel without crashing, which is $q_3$. 
\begin{figure}[h]
  \centering
    \includegraphics[width=0.7\textwidth]{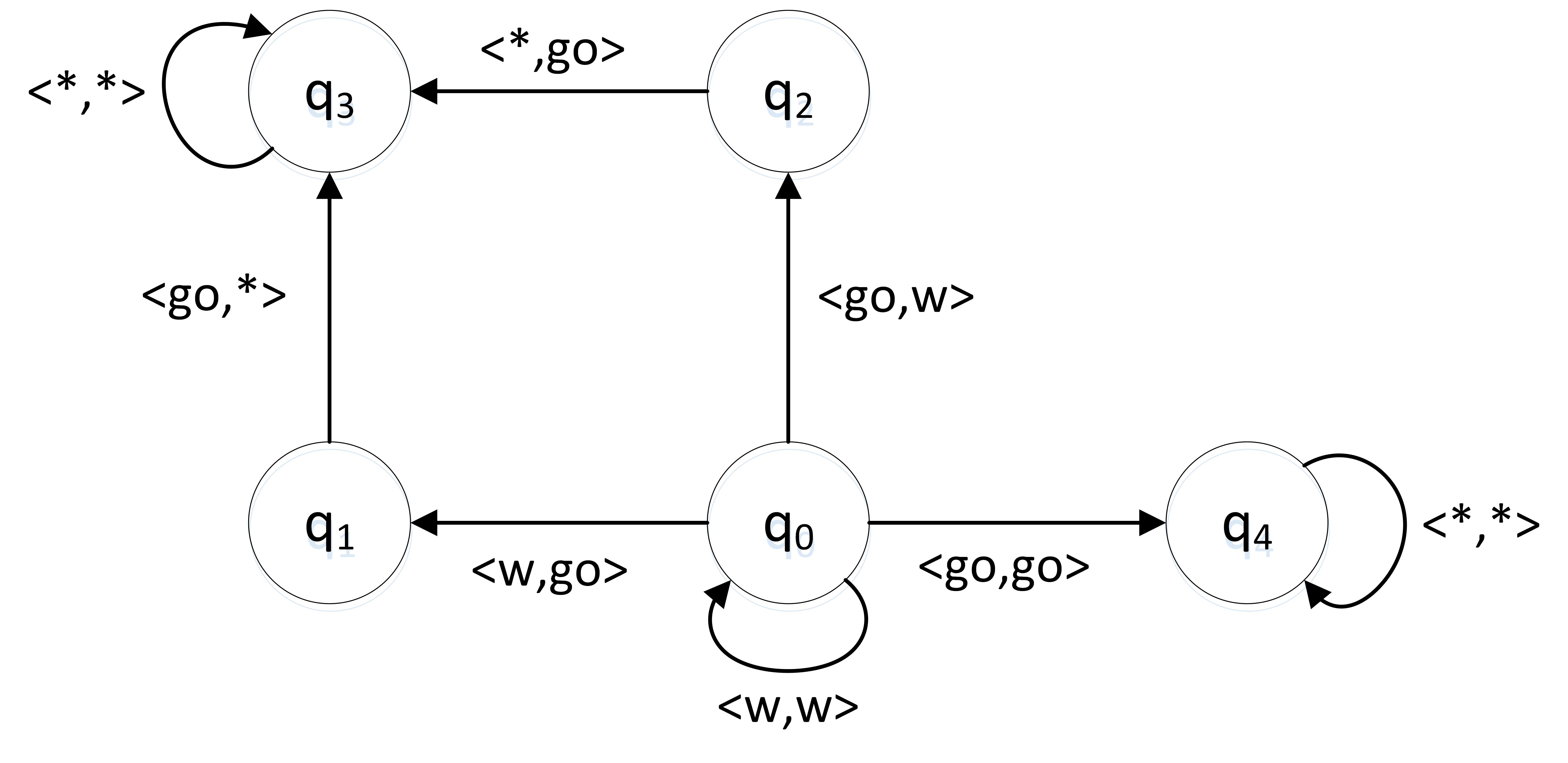}
    \caption{A CGS example.}\label{two_trains}
\end{figure}

Local rules $\langle \tau_1, \gamma_1 \rangle$ and $\langle \tau_2, \gamma_2 \rangle$ are prescribed for the agents to follow: both agents communicate with each other for their urgency $u_1$ and $u_2$ in state $q_0$; the one who is more urgent can go through the tunnel first and the other one has to wait for it; after $q_0$ the agent who waits can go. We formalize as follows. In state $q_0$, $\tau_1(q_0)=\{a_1, a_2\}$ and $\tau_2(q_0)=\{a_1, a_2\}$,
\[ \gamma_1(u_1,u_2)= \left\{ \begin{array}{ll}
         go & \mbox{if $a_1$ is more urgent than or}\\
		 & \mbox{as urgent as $a_2$};\\
         wait & \mbox{otherwise}.\end{array} \right. \] 
\[ \gamma_2(u_1,u_2)= \left\{ \begin{array}{ll}
         go & \mbox{if $a_1$ is less urgent than $a_2$};\\
         wait & \mbox{otherwise}.\end{array} \right. \] 
In state $q_1$ and $q_2$,  $\tau_1(q_1)=\{a_1\}$ and $\tau_2(q_2)=\{a_2\}$, and
\[ \gamma_1(u_1)= go, \quad \gamma_2(u_2)= go. \] 
Given the above local rules, if $a_1$ is more urgent than or as urgent as $a_2$ w.r.t. $u_1$ and $u_2$, the desired state $q_3$ is reached along with computation $q_0,q_2,q_3 \ldots$; if $a_1$ is less urgent than $a_2$ w.r.t. $u_1$ and $u_2$, the desired state $q_3$ is reached along with computation $q_0,q_1,q_3 \ldots$. As we can see, any legal computation is not prescribed by system designers, because it depends on the urgencies that are provided by agents themselves. Instead, the agents can find out how to get to the desire state $q_3$ by themselves through following their local rules. Certainly, agents can collaborate to cross the tunnel successfully without communication, but that requires an external who is aware of the available actions for each train and the game structure to make a plan for them, which is not allowed in a self-organizing multi-agent system. In a self-organizing multi-agent system, each train does not know the available actions from the other train but it can follow its local rule to behave based on the communication result instead of listening to an external to behave.
\end{example}

\subsection{Full Contribution}
Similar to ATL \cite{alur2002alternating}, our language ATL-$\Gamma$ is interpreted over a concurrent game structure $\mathcal{S}$ that has the same propositions and agents. It is an extension of classical propositional logic with temporal cooperation modalities and path quantifiers. A formula of the form $\langle A \rangle \psi$ means that coalition of agents $A$ will bring about the subformula $\psi$ by following their respective local rules in $\Gamma_A$, no matter what agents in $\Sigma \backslash A$ do, where $\psi$ is a temporal formula of the form $\bigcirc \varphi$, $\Box \varphi$ or $\varphi_1 \mathcal{U}\varphi_2$ (where $\varphi$, $\varphi_1$, $\varphi_2$ are again formulas in our language). Formally, the grammar of our language is defined below, where $p \in \Pi$ and $A \subseteq \Sigma$:
\[ \varphi :: = p \mid \lnot \varphi \mid \varphi_1 \land \varphi_2 \mid \langle A \rangle \bigcirc \varphi \mid \langle A \rangle \Box \varphi \mid \langle A \rangle \varphi_1 \mathcal{U}\varphi_2 \hspace{5 mm} \]

Given a self-organizing multi-agent system $(\mathcal{S},\Gamma)$, where $\mathcal{S}$ is a concurrent game structure and $\Gamma$ is a set of local rules, and a state $q \in Q$, we define the semantics with respect to the satisfaction relation $\models$ inductively as follows:
\begin{itemize}
\item $\mathcal{S}, \Gamma, q \models p$ iff $p \in \pi(q)$;
\item $\mathcal{S}, \Gamma, q \models \lnot \varphi$ iff $\mathcal{S}, \Gamma, q \not \models \varphi$;
\item $\mathcal{S}, \Gamma, q \models \varphi_1 \land \varphi_2$ iff $\mathcal{S}, \Gamma, q \models \varphi_1$ and $\mathcal{S}, \Gamma, q \models \varphi_2$;
\item $\mathcal{S}, \Gamma, q \models \langle A \rangle \bigcirc \varphi$ iff for all $\lambda \in out(q,\Gamma_A)$, we have $\mathcal{S}, \Gamma,\lambda[1]\models \varphi$;
\item $\mathcal{S}, \Gamma, q \models \langle A \rangle \Box \varphi$ iff for all $\lambda \in out(q,\Gamma_A)$ and all positions $i \geq 0$ it holds that $\mathcal{S}, \Gamma,\lambda[i] \models \varphi$;
\item $\mathcal{S}, \Gamma, q \models \langle A \rangle \varphi_1 \mathcal{U}\varphi_2$ iff for all $\lambda \in out(q,\Gamma_A)$ there exists a position $i \geq 0$ such that for all positions $0 \leq j \leq i$ it holds that $\mathcal{S}, \Gamma,\lambda[j] \models \varphi_1$ and $\mathcal{S}, \Gamma,\lambda[i] \models \varphi_2$.
\end{itemize}
Dually, we write $\langle A \rangle \Diamond \varphi$ for $\langle A \rangle \top \mathcal{U} \varphi$. Importantly, when we say a coalition of agents ensures a temporal formula by following their respective local rules, it means that agents in the coalition ensure a temporal formula if they take the actions that their local rules return, no matter whether agents outside of the coalition take the actions that are allowed to take or not. It is merely interpreted from our semantics, while agents' dependence relation in terms of communication does not play a role.
\begin{proposition}\label{bigger}
Given an SOMAS $(\mathcal{S},\Gamma)$ and a coalition $A$, for any coalition $A^\prime \supseteq A$, it holds that 
\[\mathcal{S}, \Gamma, q \models \langle A \rangle \psi \Rightarrow \mathcal{S}, \Gamma, q \models \langle A^\prime \rangle \psi,\]
\end{proposition}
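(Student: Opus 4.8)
The plan is to reduce the statement to a simple monotonicity property of the computation-set operator $out$ in its coalition argument, and then read the conclusion off the three temporal semantic clauses. The key observation is that adding agents to a coalition only \emph{adds} constraints on the computations that remain legal, so the set of computations shrinks.

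First I would establish the auxiliary claim: for every state $q$ and coalitions $A \subseteq A^\prime$, we have $out(q,\Gamma_{A^\prime}) \subseteq out(q,\Gamma_A)$. This is immediate from the definition of $out$. A computation $\lambda = q_0,q_1,q_2,\ldots$ lies in $out(q_0,\Gamma_{A^\prime})$ precisely when, for every position $i \geq 0$, there is a move vector $\langle \alpha_1,\ldots,\alpha_k\rangle \in D(\lambda[i])$ with $\delta(\lambda[i],\langle\alpha_1,\ldots,\alpha_k\rangle)=\lambda[i+1]$ and $\alpha_a = \gamma_a(M(\lambda[i]))$ for every $a \in A^\prime$. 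Since $A \subseteq A^\prime$, those very same witnessing move vectors also satisfy $\alpha_a = \gamma_a(M(\lambda[i]))$ for every $a \in A$, so $\lambda \in out(q_0,\Gamma_A)$. I would also note in passing that each $\gamma_a$ returns an action in $d_a(\lambda[i])$ and $D(\lambda[i])$ is a full product of the $d_i(\lambda[i])$, so every $out$-set is non-empty; hence none of the quantifications below is vacuously (and misleadingly) true.

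Then I would argue by cases on the shape of the temporal formula $\psi$. If $\psi = \bigcirc\varphi$, then $\mathcal{S},\Gamma,q \models \langle A\rangle\bigcirc\varphi$ unfolds to ``$\mathcal{S},\Gamma,\lambda[1]\models\varphi$ for all $\lambda \in out(q,\Gamma_A)$''; by the subset claim this holds a fortiori for all $\lambda \in out(q,\Gamma_{A^\prime})$, which is exactly $\mathcal{S},\Gamma,q \models \langle A^\prime\rangle\bigcirc\varphi$. The cases $\psi = \Box\varphi$ and $\psi = \varphi_1\,\mathcal{U}\,\varphi_2$ are structurally identical: each semantic clause is a statement universally quantified over $out(q,\Gamma_A)$, and restricting the range of that quantifier to the subset $out(q,\Gamma_{A^\prime})$ can only preserve its truth. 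Importantly, no induction on the nesting of $\varphi$, $\varphi_1$, $\varphi_2$ is required, since those subformulas are evaluated by $\models$ at individual states and that relation makes no reference to the outer coalition $A$.

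The only real content — and the only step deserving care — is the subset claim $out(q,\Gamma_{A^\prime}) \subseteq out(q,\Gamma_A)$: one must unwind the definition of membership in an $out$-set position by position and verify that the move vectors witnessing membership for the larger coalition still witness it for the smaller one (together with the non-emptiness remark, to be safe). Everything after that is a one-line appeal to monotonicity of a universal quantifier over a shrinking domain, applied uniformly to the three forms of $\psi$.
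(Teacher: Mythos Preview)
Your proof is correct and follows essentially the same approach as the paper: establish the inclusion $out(q,\Gamma_{A^\prime}) \subseteq out(q,\Gamma_A)$ from $A \subseteq A^\prime$, and then note that the semantic clauses for $\langle A\rangle\psi$ are universal over $out(q,\Gamma_A)$, so truth is preserved under restriction to the smaller set. Your version simply spells out the definition-chasing for the inclusion and the case split on $\psi$ that the paper compresses into a single sentence; the non-emptiness remark is a nice sanity check but not needed for the argument to go through.
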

\begin{proof}
Because $A \subseteq A^\prime$, computation set $out(q,\Gamma_{A^\prime}) \subseteq out(q,\Gamma_A)$. Thus, if $\mathcal{S}, \Gamma, q \models \langle A \rangle \psi$, meaning that $\psi$ holds in all $\lambda \in out(q,\Gamma_A)$, then $\psi$ will also hold in all $\lambda \in out(q,\Gamma_{A^\prime})$. Therefore, $\mathcal{S}, \Gamma, q \models \langle A^\prime \rangle \psi$.
\end{proof}
It means if a coalition ensures an temporal formula, then any coalition that contains that coalition will also ensure that temporal formula. Notice that $A^\prime$ can be the whole agent set $\Sigma$. 

\begin{example}
According to the local rules in the two-train example, the train who is more urgent can go through the tunnel first and the other one has to wait for him. We have that one train by itself cannot bring about the result of passing through the tunnel without crash through following the local rule, which can be expressed:
\[\mathcal{S}, \Gamma, q_0 \not \models \langle a_1 \rangle \Diamond \text{passed},\]
\[\mathcal{S}, \Gamma, q_0 \not \models \langle a_2 \rangle \Diamond \text{passed}.\]
Instead, both trains have to cooperate to bring about the result. Thus, we have that both agents by themselves can bring about the result of passing through the tunnel without crash through following the local rules, which can be expressed:
\[\mathcal{S}, \Gamma, q_0 \models \langle a_1, a_2 \rangle \Diamond \text{passed}.\]
\end{example}

As we mentioned in the introduction, because a self-organizing multi-agent system has autonomous agents and local interactions between them, we are not clear about how the local components lead to the global behavior of the system, making a self-organizing multi-agent system usually evaluated through implementation. One possible solution to understand the complex link between local agent behavior and system level behavior is to divide the system into components, each of which has contribution to the global behavior of the system and is independent on agents outside the coalition. The principle is that, when studying the behavior of one component, we do not need to think about the influences coming from agents outside the component. The independence between different components in terms of their contributions to the global behavior of the system allows us to understand the complex link. With this idea, we first define a notion of \textit{independent components} over a self-organizing multi-agent system. 
\begin{definition}[Independent Components] \label{independent}
Given an SOMAS $(\mathcal{S},\Gamma)$, a coalition of agents $A$ is an independent component w.r.t. a state $q$ iff for all $a \in A$ and its abstract local rule $\langle \tau_a, \gamma_a \rangle$ it is the case that $\tau_a(q) \subseteq A$; a coalition of agents $A$ is an independent component w.r.t. a set of computations $out$ iff for all $\lambda \in out$ and $q \in \lambda$ $A$ is an independent component w.r.t. state $q$.
\end{definition}
An independent component w.r.t. state $q$ is a coalition of agents $A$ which only gets input information from agents inside coalition $A$ in state $q$. In other words, an independent component might output information to agents in $\Sigma \backslash A$, but never get input from agents in $\Sigma \backslash A$.
\begin{proposition}
Given an SOMAS $(\mathcal{S},\Gamma)$ and two coalitions $A$ and $B$ where $A \cap B \not= \emptyset$, if both $A$ and $B$ are independent components, then $A \cap B$ is also an independent component.
\end{proposition}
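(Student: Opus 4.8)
The plan is to unfold Definition~\ref{independent} directly and chase the containment of the communication sets $\tau_a$. First I would fix the parameter with respect to which both $A$ and $B$ are assumed to be independent components: the proposition implicitly refers to a common state $q$ (or, in the second reading of the definition, a common computation set $out$), so I would make this shared parameter explicit before starting.

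Next I would take an arbitrary agent $a \in A \cap B$ together with its abstract local rule $\langle \tau_a, \gamma_a \rangle$. Because $A$ is an independent component w.r.t. $q$, Definition~\ref{independent} gives $\tau_a(q) \subseteq A$; because $B$ is an independent component w.r.t. $q$, the same definition gives $\tau_a(q) \subseteq B$. Intersecting the two inclusions yields $\tau_a(q) \subseteq A \cap B$. Since $a$ was an arbitrary member of $A \cap B$, this is exactly the condition that $A \cap B$ is an independent component w.r.t. $q$; the hypothesis $A \cap B \neq \emptyset$ is what guarantees $A \cap B$ is a genuine coalition, so the conclusion is informative rather than vacuous.

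For the version of the definition phrased in terms of a set of computations $out$, I would simply lift the pointwise argument: for every $\lambda \in out$ and every state $q$ occurring on $\lambda$, both $A$ and $B$ are independent components w.r.t.\ $q$ by assumption, hence so is $A \cap B$ by the previous paragraph; quantifying over all such $\lambda$ and $q$ gives that $A \cap B$ is an independent component w.r.t.\ $out$.

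I do not expect a genuine obstacle here, since the core of the argument is a single set inclusion $\tau_a(q) \subseteq A$ and $\tau_a(q) \subseteq B$ imply $\tau_a(q) \subseteq A \cap B$. The only points worth care are stating the shared parameter explicitly and observing that the nonemptiness hypothesis plays a framing role (keeping $A \cap B$ a legitimate coalition) rather than contributing any real difficulty to the proof.
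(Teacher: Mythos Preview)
Your proof is correct and follows essentially the same approach as the paper: both argue that an agent in $A\cap B$ has its communication set contained in $A$ and in $B$, hence in $A\cap B$. The paper phrases this as ``agents in $A\cap B$ do not get input from agents in $(\Sigma\setminus A)\cup(\Sigma\setminus B)=\Sigma\setminus(A\cap B)$,'' which is the complementary formulation of your inclusion $\tau_a(q)\subseteq A\cap B$; your version is arguably a bit more explicit about the parameter $q$ (resp.\ $out$), but the underlying argument is identical.
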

\begin{proof}
Because $A$ is an independent component and $A \cap B \subseteq A$, agents in $A \cap B$ do not get input from agents in $\Sigma \backslash A$. For the same reason, agents in $A \cap B$ do not get input from agents in $\Sigma \backslash B$. Thus, agents in $A \cap B$ do not get input from agents in $(\Sigma \backslash A) \cup (\Sigma \backslash B) = \Sigma \backslash (A \cap B)$. Therefore, $A \cap B$ is also an independent component.
\end{proof}
 
\begin{example}
In the two-train example, as in state $q_0$ both trains need to communicate with each other as their local rules require, neither $a_1$ nor $a_2$ is an independent component w.r.t. state $q_0$. Because the system only consists of two trains, the two trains form an independent component w.r.t. state $q_0$. Because in states $q_1$ and $q_2$ each train only gets urgency from itself to go ahead, $a_1$ is an independent component w.r.t. state $q_1$ and $a_2$ is an independent component w.r.t. state $q_2$.
\end{example}

Using the notion of independent components and our language, We then propose the notion of \emph{semantic independence, structural independence and full contribution} to characterize the independence between agents from different perspectives.
\begin{definition}[Semantic Independence, Structural Independence and Full Contribution]\label{fullcontribution}
Given an SOMAS $(\mathcal{S},\Gamma)$, a coalition of agents $A$ and a state $q$,
\begin{itemize}
\item $A$ is semantically independent with respect to a temporal formula $\psi$ from $q$ iff $\mathcal{S}, M, \Gamma, q \models \langle A \rangle \psi$;
\item $A$ is structurally independent from $q$ iff $A$ is an independent component w.r.t. the set of computations $out(q,\Gamma_A)$;
\item $A$ has full contribution to $\psi$ in $q$ iff $A$ is the minimal (w.r.t. set-inclusion) coalition that is both semantically independent with respect to $\psi$ and structurally independent from $q$.
\end{itemize}
\end{definition}
The notion of full contribution captures the property of coalition $A$ in terms of independence from two different perspectives: semantically, coalition $A$ ensures $\psi$ through following its local rules no matter what other agents do; structurally, coalition $A$ do not communicate with other agents when following its local rules no matter what other agents do. Notice that when we say coalition $A$ has full contribution to $\psi$ in $q$, it is important for coalition $A$ to be the minimal coalition that is both semantically independent with respect to $\psi$ and structurally independent from state $q$. This is because there might exist multiple coalitions that are both semantically independent with respect to $\psi$ and structurally independent from state $q$ and the set of all agents $\Sigma$ is apparently one of them. In other words, coalition $A$ has full contribution to $\psi$ because any subset of coalition $A$ is either semantically dependent with respect to $\psi$ or structurally dependent. The following example illustrates why we need both semantic independence and structural independence to characterize the full contribution of a coalition of agents.

\begin{example}
Consider the transition system in Fig.\ref{SS}(top). $\langle \alpha, * \rangle$ is interpreted as an action vector where agent $a_1$ performs action $\alpha$ and agent $a_2$ does whatever he can. Local rules are prescribed for agents $a_1$ and $a_2$: in state $q_0$ $a_1$ needs to communicate with $a_2$ for some valuable information and is supposed to do $\alpha$ based on the communication result. As we can see from the structure, $a_1$ brings about $p$ no matter what $a_2$ does, which means that coalition $\{a_1\}$ is semantically independent. However, since $a_1$ needs to communicate with $a_2$ in state $q_0$, coalition $\{a_1\}$ is not structurally independent.

Consider the transition system in Fig.\ref{SS}(down). $\langle \alpha, * \backslash \alpha \rangle$ is interpreted as an action vector where agent $a_1$ performs action $\alpha$ and agent $a_2$ deviates from action $\alpha$. Similar for $\langle * \backslash \alpha, \alpha \rangle$. Local rules are prescribed for agents $a_1$ and $a_2$: in state $q_0$, both agents do not need to communicate with each other and are supposed to do $\alpha$. As we can see from the structure, $a_1$ and $a_2$ can bring about $p$ through following their local rules but neither $a_1$ nor $a_2$ can achieve that by itself, which means that coalitions $\{a_1\}$ or $\{a_2\}$ is not semantically independent. But since they do not need to communicate with each other, it is structurally independent.

\begin{figure}[h]
  \centering
    \includegraphics[width=0.5\textwidth]{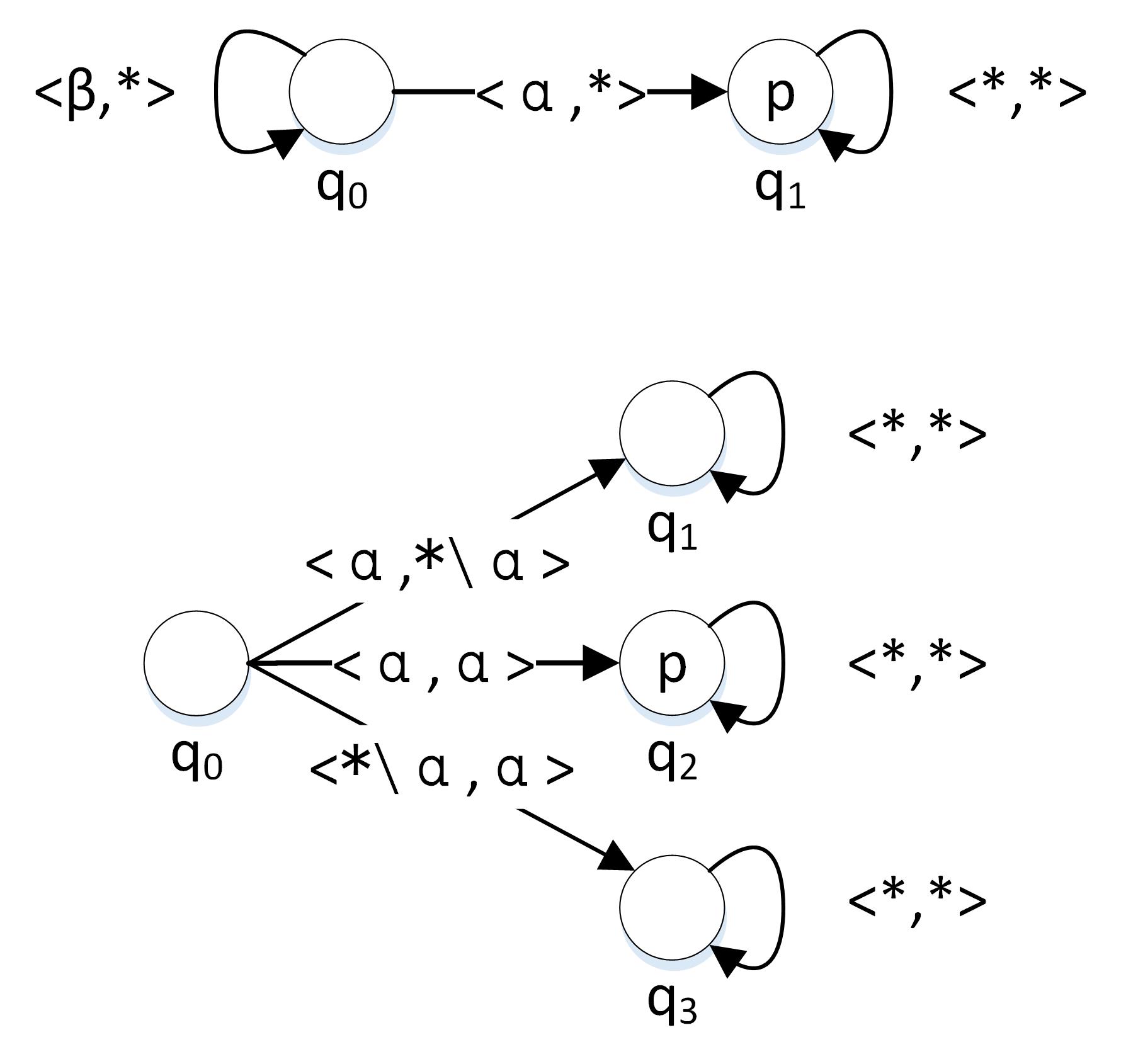}
    \caption{Comparison between semantic independence (top) and structural independence (down).}\label{SS}
\end{figure}

As for the two-train example, neither train, namely $a_1$ or $a_2$ as a coalition, has full contribution to the result of passing through the tunnel without crash. The reasons are listed as follows: any single train cannot ensure the result of passing through the tunnel without crash, which means that any single train is not semantically independent. Moreover, both trains follow their local rules to communicate with each other in state $q_0$, which means that any single train is not an independent component w.r.t. state $q_0$ thus not being structurally independent.

The two trains have full contribution to the result of passing through the tunnel without crash, because both agents by themselves can bring about the result of passing through the tunnel without crash through following the local rules, and the coalition of two trains is obviously an independent component w.r.t. $out(q_0, \Gamma_{\{a_1,a_2\}})$, which means that it is structurally independent, and the coalition of two trains is obviously the minimal coalition that is both semantically independent w.r.t. the result of no crash and structurally independent from state $q_0$.
\end{example}

\begin{proposition}
Given an SOMAS $(\mathcal{S},M,\Gamma)$, a state $q$ and a temporal formula $\psi$, there does not exist two different coalitions $A$ and $B$ such that $A \cap B \not= \emptyset$ and both $A$ and $B$ has full contribution to $\psi$ in $q$.
\end{proposition}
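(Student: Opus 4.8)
The plan is to argue by contradiction, exploiting that ``having full contribution to $\psi$ in $q$'' is, by Definition~\ref{fullcontribution}, a \emph{minimality} condition: $A$ has full contribution iff $A$ is both semantically independent with respect to $\psi$ and structurally independent from $q$, while no proper subset of $A$ enjoys both properties at once. So assume, towards a contradiction, that $A$ and $B$ are distinct, that $A \cap B \neq \emptyset$, and that both have full contribution to $\psi$ in $q$. Since $A \neq B$ and each is minimal for the conjoined property, neither contains the other, hence $C := A \cap B$ is a proper nonempty subset of $A$ (and of $B$). It then suffices to show that $C$ is itself both semantically independent with respect to $\psi$ and structurally independent from $q$, since this contradicts the minimality of $A$. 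In other words, everything reduces to showing that the family of coalitions that are simultaneously semantically independent with respect to $\psi$ and structurally independent from $q$ is closed under nonempty intersection.

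I would establish the two closure properties separately. For structural independence of $C$, the natural tool is the proposition already proved above --- that the intersection of two overlapping independent components with respect to a state is again an independent component with respect to that state --- applied state by state. Unfolding Definition~\ref{independent}, structural independence of $A$ from $q$ means that for every $\lambda \in out(q,\Gamma_A)$ and every state $q'$ occurring on $\lambda$ one has $\tau_a(q') \subseteq A$ for all $a \in A$, and likewise for $B$; I then want the same for every computation in $out(q,\Gamma_C)$. I would prove this by induction along such a computation: the base case is the state $q$ itself, where $\tau_a(q) \subseteq A$ and $\tau_a(q) \subseteq B$, hence $\tau_a(q) \subseteq C$, for each $a \in C$; for the inductive step one has to relate each newly visited state back to states that occur along computations in $out(q,\Gamma_A)$ and in $out(q,\Gamma_B)$ so that the state-wise intersection proposition applies.

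For semantic independence of $C$ one has $\mathcal{S},\Gamma,q \models \langle A \rangle \psi$ and $\mathcal{S},\Gamma,q \models \langle B \rangle \psi$ and must derive $\mathcal{S},\Gamma,q \models \langle C \rangle \psi$. Note that Proposition~\ref{bigger} points the wrong way here: passing to a \emph{larger} coalition preserves semantic independence, whereas shrinking $A$ (and $B$) down to $C$ \emph{enlarges} the outcome set, since $out(q,\Gamma_A) \subseteq out(q,\Gamma_C)$ and $out(q,\Gamma_B) \subseteq out(q,\Gamma_C)$. The route I would take is to lean on structural independence: because the agents of $C$, when they follow their rules, only ever consult agents inside $A$ and only ever consult agents inside $B$, their prescribed moves are ``self-contained'', and one tries to show that every computation in $out(q,\Gamma_C)$ can be matched --- state for state --- by a computation in $out(q,\Gamma_A)$ or in $out(q,\Gamma_B)$, so that the truth of $\psi$, which depends only on the sequence of states, carries over.

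The hard part --- in fact the crux of the whole statement, shared by the inductive step above --- is precisely this matching: one must rule out that the extra computations in $out(q,\Gamma_C) \setminus \big( out(q,\Gamma_A) \cup out(q,\Gamma_B) \big)$, along which the agents of $A \setminus C$ and of $B \setminus C$ are unconstrained, can introduce a fresh $\psi$-violating computation, or a reachable state at which some agent of $C$ is forced to consult an outsider. Making this rigorous --- reconstructing, from a given $\lambda \in out(q,\Gamma_C)$, companion computations in $out(q,\Gamma_A)$ and in $out(q,\Gamma_B)$ sharing its state trace, using only the determinism of $\delta$ and the two structural-independence hypotheses --- is where the real effort goes, and it is the step I would examine most closely.
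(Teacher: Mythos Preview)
Your overall strategy --- argue by contradiction by showing that $C=A\cap B$ is itself both semantically independent with respect to $\psi$ and structurally independent from $q$, thereby violating the minimality of $A$ (or of $B$) --- is exactly the paper's approach. The paper's proof, however, is far terser than your proposal: it simply asserts that $\langle A\rangle\psi$ together with $\langle B\rangle\psi$ yields $\langle A\cap B\rangle\psi$, and it invokes the earlier intersection proposition to conclude that $A\cap B$ is an independent component, without ever addressing the mismatch of computation sets (namely $out(q,\Gamma_A)$ versus $out(q,\Gamma_B)$ versus $out(q,\Gamma_{A\cap B})$) that you rightly flag.

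In other words, the issues you single out as the ``hard part'' --- that Proposition~\ref{bigger} points the wrong way for semantic independence, and that a computation in $out(q,\Gamma_C)$ need not lie in $out(q,\Gamma_A)\cup out(q,\Gamma_B)$ --- are not resolved in the paper's argument either; they are simply passed over. Your proposed state-matching route is more honest about what would actually be required, and your suspicion about it is well founded: from an arbitrary $\lambda\in out(q,\Gamma_C)$ one cannot in general manufacture a companion in $out(q,\Gamma_A)$ with the \emph{same} state trace, because determinism of $\delta$ gives no mechanism to compensate a change in the $(A\setminus C)$-actions by adjusting the $(\Sigma\setminus A)$-actions while preserving each successor state. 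So your proposal follows the paper's line and correctly isolates the crux; that crux is also where the paper's own proof is incomplete.
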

\begin{proof}
Suppose there exists two different coalitions $A$ and $B$ that have full contribution to $\psi$, which means that both $A$ and $B$ are the minimal (w.r.t. set-inclusion) coalitions that are both semantically independent with respect to $\psi$ and structurally independent from $q$. Because $\mathcal{S}, M, \Gamma, q \models \langle A \rangle \psi$ and $\mathcal{S}, M, \Gamma, q \models \langle B \rangle \psi$, we have $\mathcal{S}, M, \Gamma, q \models \langle A \cap B \rangle \psi$. Because $A \cap B \not= \emptyset$ and both $A$ and $B$ are independent components, we have $A \cap B$ is also an independent component. Therefore, $A \cap B$ is both semantically independent with respect to $\psi$ and structurally independent from $q$. If $A \cap B \subset A $, $A$ is not the minimal (w.r.t. set-inclusion) coalition that is both semantically independent with respect to $\psi$ and structurally independent from $q$. If $A \cap B = A $, which means that $A \subset B$, then $A$, not $B$, is the minimal (w.r.t. set-inclusion) coalition that is both semantically independent with respect to $\psi$ and structurally independent from $q$. Contradiction!
\end{proof}
The above proposition is consistent with an intuition: when semantically coalition $A$ ensures $\psi$ through following local rules but $A$ is not structurally independent from $q$, meaning that agents in $A$ need to communicate with agents outside $A$ to ensure $\psi$, we will enlarge the coalition through including the agents with which agents from $A$ communicate. The resulting coalition is the minimal (w.r.t. set-inclusion) coalition that is both semantically independent with respect to $\psi$ and structurally independent from $q$, which is also unique.

\section{Model Checking}\label{sec:verification}
Our logic-based framework provides us another approach to verify a self-organizing multi-agent system. If we only care about whether the system will bring about a property, we only need to check whether a formula in ATL-$\Gamma$ with the whole set of agents is satisfied in a certain state; if we want to know whether a coalition of agents will bring about a property independently, we need to check whether a coalition of agents has full contribution to a temporal formula in a certain state. In this section, we will investigate how difficult to answer these two model-checking problems. We first measure the model-checking problem for ATL-$\Gamma$. In order to answer that, we extend our concurrent game structure $\mathcal{S}$ by adding new propositions to states that indicate for each local rule $\langle \tau_a, \gamma_a \rangle \in \Gamma$, whether or not it is followed by a corresponding agent. For this purpose, we define the extended game structure $\mathcal{S}^F=(k,Q^F,\pi^F,\Pi^F,ACT,d^F,\delta^F)$ as follows:
\begin{itemize}
\item $Q^F=\{\langle\bot,q\rangle \mid q\in Q\} \cup \{\langle q^\prime, q\rangle \mid q^\prime, q \in Q \text{ and } q \text{ is a successor of } q^\prime \text{ in } \mathcal{S}\}$. In other words, a state of the form $\langle\bot, q\rangle$ of $\mathcal{S}^F$ corresponds to the game structure $\mathcal{S}$ being in state $q$ at the beginning of a computation, and a state of the form $\langle q^\prime, q\rangle$ corresponds to $\mathcal{S}$ being in state $q$ during a computation whose previous state was $q^\prime$.
\item For each agent $a \in \Sigma$, there is a new proposition \textit{followed}; that is $\Pi^F=\Pi \cup \{followed_a \mid a \in \Sigma\}$.
\item For each state of the form $\langle\bot,q\rangle \in Q^F$, we have $\pi^F(\langle\bot, q\rangle)=\pi(q)$; For each state $\langle q^\prime, q\rangle\in Q^F$, we have 
\begin{align*}
\pi^F(\langle q^\prime,q \rangle)=&\pi(q)\cup \{followed_a \mid \text{ there is a move vector } \langle \alpha_1, \ldots, \alpha_k \rangle \in D(q^\prime) \\
&\text{such that } \delta(q^\prime, \langle \alpha_1, \ldots, \alpha_k \rangle)=q \text{ and } \alpha_a \in \gamma_a(M(q)).\}
\end{align*}
\item For each player $\alpha \in \Sigma$ and each state $\langle \cdot, q \rangle \in Q^F$, we have $d^F_{\alpha}(\langle \cdot, q \rangle) = d_a(q)$;
\item For each state $\langle \cdot, q \rangle \in Q^F$ and each move vector $\langle \alpha_1, \ldots, \alpha_k \rangle \in D(q)$, we have $\delta^F(\langle \cdot, q \rangle, \langle \alpha_1, \ldots, \alpha_k \rangle) = \delta(q,\langle \alpha_1, \ldots, \alpha_k \rangle)$.
\end{itemize}
That is how we transfer $\mathcal{S}$ to $\mathcal{S}^F$, and one computation in $\mathcal{S}$ corresponds to one computation in $\mathcal{S}^F$. The new propositions $\{followed_a \mid a \in \Sigma\}$ allow us to identify the computations that follow the local rules. Using $\mathcal{S}^F$, we encode agents' following of local rules as propositions in the states. Therefore, evaluating formulas of the form $\langle A \rangle \psi$ over states of $\mathcal{S}$ can be reduced to evaluating standard ATL formulas over states of $\mathcal{S}^F$. In classic ATL \cite{alur2002alternating}, a formula of the form $\llangle A \rrangle \psi$ means there exists a set $F_A$ of strategies, one for each player in $A$, to bring about $\psi$, which can be interpreted as agents' capacity of bringing about a property and is different from the semantics of $\langle A \rangle \psi$ in this paper. Namely, given a SOMAS $(\mathcal{S},\Gamma)$, a state $q$ and a set of agents $A$, a formula $\mathcal{S}, \Gamma, q \models \langle A \rangle \psi$ holds iff the state $\langle \bot, q \rangle$ of the extended game structure $\mathcal{S}^F$ satisfies the following ATL$^*$ formula:
\[ \llangle A \rrangle (\bigwedge_{a \in A} \Box followed_a \land \psi).\]
As we can see, even though coalition of agents $A$ has the capacity to bring about $\psi$, formula $\mathcal{S}, \Gamma, q \models \langle A \rangle \psi$ does not necessarily hold, because coalition $A$ has to follow its local rules to achieve that. To see why, consider the two-train example. 
\begin{example}
Suppose we change the local rules to be the following:
\[ \gamma_1(u_1,u_2)= \left\{ \begin{array}{ll}
         go & \mbox{if $a_1$ is more urgent than $a_2$};\\
         wait & \mbox{otherwise}.\end{array} \right. \] 
\[ \gamma_2(u_1,u_2)= \left\{ \begin{array}{ll}
         go & \mbox{if $a_1$ is less urgent than $a_2$};\\
         wait & \mbox{otherwise}.\end{array} \right. \]
When both trains have the same urgency, it will result in deadlock instead of passing through the tunnel if both trains follow their local rules to wait, which can be expressed in
\[\mathcal{S}, \Gamma, q_0 \not \models \langle a_1, a_2 \rangle \Diamond \text{passed}.\]
\[\mathcal{S}, \Gamma, q_0 \models \langle a_1, a_2 \rangle \Diamond \text{deadlock}.\]
However, it is clear that both trains can cooperate to pass through the tunnel without crash, which can be expressed in
\[\mathcal{S}, q_0 \models \llangle a_1, a_2 \rrangle \Diamond \text{passed}.\]
\end{example}

Although checking an ATL-${\Gamma}$ formula can be reduced to checking a logically equivalent ATL$^*$ formula, it still can be done in an efficient way so that the corresponding complexity bounds are much lower than those for general ATL$^*$ model checking.

\begin{proposition}\label{verification}
The model-checking problem for ATL-$\Gamma$ can be solved in time $\mathcal{O}(m^2 \cdot n \cdot l)$ for a self-organizing multi-agent system with $m$ states, a coalition of agents $|A|=n$ and a formula of length $l$.
\end{proposition}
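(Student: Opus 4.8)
The plan is to exploit the fact that once a coalition $A$ is required to follow its local rules $\Gamma_A$, the behaviour of every agent $a \in A$ is \emph{deterministic}: in each state $q$ it must play $\gamma_a(M(q)) \in d_a(q)$. Hence $out(q,\Gamma_A)$ coincides with the set of all infinite computations starting from $q$ in the \emph{restricted structure} $\mathcal{S}_A$ obtained from $\mathcal{S}$ by shrinking $d_a(q)$ to the singleton $\{\gamma_a(M(q))\}$ for every $a \in A$ and every $q$. In $\mathcal{S}_A$ only the agents in $\Sigma \setminus A$ still have choices, so the semantic clauses for $\langle A\rangle\psi$ become plain \emph{universal} ("for all paths") branching-time statements over $\mathcal{S}_A$; each operator $\langle A\rangle\bigcirc$, $\langle A\rangle\Box$, $\langle A\rangle\mathcal{U}$ behaves like the CTL modalities $\forall\bigcirc$, $\forall\Box$, $\forall\mathcal{U}$ over the transition relation $R_A \subseteq Q\times Q$ of $\mathcal{S}_A$. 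This is precisely why the problem escapes the high complexity of general ATL$^*$ model checking: the quantifier $\llangle A\rrangle(\bigwedge_{a\in A}\Box\mathit{followed}_a\land\psi)$ of the earlier reduction has a \emph{unique} witnessing $A$-strategy (the rule-following one, which always exists because $\gamma_a(M(q))\in d_a(q)$), so no genuine strategy synthesis is needed and a branching-time labelling procedure suffices.

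Concretely, I would use the standard bottom-up labelling algorithm. Enumerate the $\mathcal{O}(l)$ subformulas of the input formula from innermost to outermost, and for each subformula $\varphi$ compute the set $\llbracket\varphi\rrbracket = \{q \in Q : \mathcal{S},\Gamma,q\models\varphi\}$ using the sets already computed for its immediate subformulas. Atomic propositions and the Boolean connectives are handled in $\mathcal{O}(m)$ time each by a single scan of $Q$. For a temporal subformula with coalition $A$ of size $n$: first build $R_A$ by, for every state $q$, evaluating the $n$ prescribed actions $\gamma_a(M(q))$ and retaining exactly those successors of $q$ reachable by a joint action consistent with them, which costs $\mathcal{O}(m^2\cdot n)$ (at most $m^2$ candidate source--target pairs, each tested against $n$ prescribed actions, assuming oracle access to $\tau_a$, $m_i$, $\gamma_a$). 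Then apply the appropriate linear-time, i.e. $\mathcal{O}(m+|R_A|)=\mathcal{O}(m^2)$, graph routine over $R_A$: for $\langle A\rangle\bigcirc\varphi$ keep the states all of whose $R_A$-successors lie in $\llbracket\varphi\rrbracket$; for $\langle A\rangle\Box\varphi$ run a backward search in $\mathcal{S}_A$ from $Q\setminus\llbracket\varphi\rrbracket$ and take the complement of the states that can reach it; for $\langle A\rangle\varphi_1\mathcal{U}\varphi_2$ run the counter-based least-fixpoint computation that adds a state $q \in \llbracket\varphi_1\rrbracket$ once all its $R_A$-successors have been added, seeded with $\llbracket\varphi_1\rrbracket\cap\llbracket\varphi_2\rrbracket$. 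Each temporal subformula therefore costs $\mathcal{O}(m^2\cdot n)$, and over $\mathcal{O}(l)$ subformulas we obtain the claimed $\mathcal{O}(m^2\cdot n\cdot l)$; finally answer the original query by testing $q\in\llbracket\varphi_0\rrbracket$.

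The part that needs care --- and is the only real obstacle --- is the temporal cases: a naive fixpoint iteration for $\langle A\rangle\Box$ and $\langle A\rangle\mathcal{U}$ recomputes predecessors $\mathcal{O}(m)$ times and would cost $\mathcal{O}(m^3)$ per subformula, so one must invoke the linear-time backward-reachability and counter-based algorithms instead, and justify that they compute the right sets. That justification rests on two observations I would spell out: (i) every state of a CGS has at least one successor (action sets are non-empty), so every finite $R_A$-path extends to a genuine infinite computation in $out(\cdot,\Gamma_A)$ --- this is what makes ``$\langle A\rangle\Box\varphi$ holds at $q$'' equivalent to ``no $\neg\varphi$-state is $R_A$-reachable from $q$'' and the $\mathcal{U}$-fixpoint correct; and (ii) bottom-up evaluation is sound because the truth of a subformula at a state is defined purely with respect to $\mathcal{S}$ and is independent of the enclosing modality, even though different modalities use different coalitions and hence different restricted relations $R_A$, so $R_A$ must be rebuilt for each temporal subformula --- which is already accounted for in the $m^2\cdot n$ per-subformula cost. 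The remaining bookkeeping --- constructing $\mathcal{S}_A$ and $R_A$ and evaluating the local-rule functions within the stated bound --- is routine.
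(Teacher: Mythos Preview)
Your proposal is correct and reaches the same bound, but by a somewhat different route than the paper. The paper follows the ATL methodology of \cite{alur2002alternating}: it builds the extended structure $\mathcal{S}^F$ (whose states are pairs $\langle q',q\rangle$, hence $\mathcal{O}(m^2)$ states) so that rule-following can be expressed propositionally via $\mathit{followed}_a$, reduces $\langle A\rangle\psi$ to the ATL$^*$ formula $\llangle A\rrangle(\bigwedge_{a\in A}\Box\mathit{followed}_a\land\psi)$, then passes to a 2-player turn-based game $\mathcal{S}^F_A$, prunes the $A$-moves that violate the local rules, and runs standard ATL model checking on the AND--OR graph. You instead observe directly that fixing each $a\in A$ to its unique prescribed action $\gamma_a(M(q))$ collapses $\langle A\rangle$ to plain universal CTL quantification over the restricted transition relation $R_A$, and then apply linear-time CTL routines per operator.

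The core insight is the same---the $A$-strategy is uniquely determined, so no strategy synthesis is needed---but the $m^2$ factor arises differently: in the paper from the $\mathcal{O}(m^2)$ states of the pair-encoded $\mathcal{S}^F$, in your argument from the $\mathcal{O}(m^2)$ edges of $R_A$. Your approach is more elementary and self-contained (no detour through ATL$^*$ or the $\mathit{followed}_a$ encoding), and it makes explicit the point that different temporal subformulas may carry different coalitions and hence require rebuilding $R_A$, which the paper's proof leaves implicit. The paper's approach, on the other hand, situates the result within the standard ATL verification framework and reuses its off-the-shelf complexity results.
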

\begin{proof}
We adopt the proof strategy from \cite{alur2002alternating}. We first construct the extended game structure $S^F$, where the obedience/violation of local rules are encoded in the structure. To verify whether a temporal formula $\psi$ can be enforced by a coalition of agents $A$ through following its local rules in a self-organizing multi-agent system at state $q$, we need to check formula $\mathcal{S}, \Gamma, q \models \langle A \rangle \psi$, which can be reduced to evaluating an ATL$^*$ formula for $\mathcal{S}^F$ and $\langle \bot, q \rangle$:
\[ \mathcal{S}^F, \langle \bot, q \rangle \models \llangle A \rrangle (\bigwedge_{a \in A} \Box followed_a \land \psi).\]
We then construct a 2-player turn-based synchronous game $\mathcal{S}^F_A$, where player 1 controls all the actions of coalition $A$ (called A-move) leading to an auxiliary state, after which player 2 controls all the actions of coalition $\Sigma \backslash A$ (called B-move) leading to the next state of the original transition. Such a game can be interpreted as an AND-OR graph, for which we can solve certain invariance and reachability problems in linear time. Because following the local rules is a necessary condition for coalition $A$ to win the game, we can remove all the outgoing A-move in $\mathcal{S}^F$ from an state where $\lnot \bigwedge_{a \in A} followed_a$ holds and its outgoing transitions, which can be done in polynomial time to the number of states and the number of agents in $A$. If the original game structure $\mathcal{S}$ has $m$ states, then the turn-based synchronous structure $\mathcal{S}^F_A$ has $\mathcal{O}(m^2)$ states. We then perform normal model checking for $\llangle A \rrangle \psi$ in $\mathcal{S}^F_A$, which can be done in polynomial time to the length of $\psi$. Therefore, checking formula $\mathcal{S}, \Gamma, q \models \langle A \rangle \psi$ is in the complexity of $\mathcal{O}(m^2 \cdot n \cdot l)$.
\end{proof}

We then measure the complexity of verifying a self-organizing multi-agent system, namely checking whether a coalition of agents has full contribution to a temporal formula in a certain state. A coalition has full contribution to a temporal formula in a state if only if it is the minimal (w.r.t. set-inclusion) coalition that is both semantically independent with respect to that temporal formula and structurally independent from that state. To verify structural independence, we can encode agents' communication in the structure like what we did for the obedience/violation of local rules. Namely, given a coalition $A$, we define the extended game structure $\mathcal{S}^E=(k,Q,\pi^E,\Pi^E,ACT,d,\delta)$ as follows:
\begin{itemize}
\item For each agent $a \in A$, there is a new proposition $InA$; that is $\Pi^E=\Pi \cup \{InA_a \mid a \in A\}$.
\item $\pi^E(q)=\pi(q) \cup \{InA_a \mid \tau_a(q) \subseteq A\}$.
\end{itemize}
The new propositions $\{InA_a \mid a \in A\}$ allow us to identify the states where agents in $A$ only get input from agents inside the coalition. We then construct the extended game structure $\mathcal{S}^{EF}$, where the obedience/violation of local rules are encoded in the structure. Therefore, evaluating whether a coalition of agents $A$ is an independent component with respect to $out(q,\Gamma_A)$ can be again reduced to standard ATL formulas over states of $\mathcal{S}^{EF}$. 
\begin{proposition}\label{verification2}
The model-checking problem for structural independence can be solved in time $\mathcal{O}(m^2 \cdot n^2)$ for a self-organizing multi-agent system with $m$ states and a coalition of agents $|A|=n$.
\end{proposition}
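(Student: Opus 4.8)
The plan is to reduce the question ``is $A$ structurally independent from $q$?'' to a single instance of the ATL-$\Gamma$ model-checking problem of Proposition~\ref{verification}, and then read off the bound. By Definition~\ref{fullcontribution}, $A$ is structurally independent from $q$ precisely when $A$ is an independent component with respect to $out(q,\Gamma_A)$, which by Definition~\ref{independent} means: for every $a \in A$, every $\lambda \in out(q,\Gamma_A)$ and every state $q'$ occurring on $\lambda$ (the initial state $q$ included), $\tau_a(q') \subseteq A$. The propositions $InA_a$ of $\mathcal{S}^E$ were introduced so that $InA_a \in \pi^E(q')$ iff $\tau_a(q') \subseteq A$; hence, unfolding the ATL-$\Gamma$ clause for $\langle A \rangle \Box$ (which quantifies over all $\lambda \in out(q,\Gamma_A)$ and all positions $i \geq 0$), the condition above is literally
\[\mathcal{S}^E, \Gamma, q \models \langle A \rangle \Box \bigwedge_{a \in A} InA_a .\]
The first step of the proof is therefore to state and justify this equivalence, observing that passing from $\mathcal{S}$ to $\mathcal{S}^E$ touches only the labelling, so $out(q,\Gamma_A)$ --- and with it the ATL-$\Gamma$ meaning of every formula --- is unchanged.

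The second step is to feed $\langle A \rangle \Box \bigwedge_{a \in A} InA_a$ into the algorithm behind Proposition~\ref{verification}, run on the SOMAS $(\mathcal{S}^E,\Gamma)$. Concretely this builds the extended structure $\mathcal{S}^{EF}$ (adding the $followed_a$ propositions exactly as before), reduces the query to checking
\[\mathcal{S}^{EF}, \langle \bot, q \rangle \models \llangle A \rrangle \Bigl(\bigwedge_{a \in A} \Box followed_a \land \Box \bigwedge_{a \in A} InA_a\Bigr),\]
forms the two-player turn-based game $\mathcal{S}^{EF}_A$, deletes the $A$-moves leaving states where $\lnot \bigwedge_{a \in A} followed_a$, and solves the resulting $\Box$-invariance problem on the AND/OR graph. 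Now do the counting: $\mathcal{S}^{EF}_A$ has $\mathcal{O}(m^2)$ states, player~$1$ controls the $n$ agents of $A$, and the temporal formula $\Box \bigwedge_{a \in A} InA_a$ has length $\Theta(n)$; substituting $l = \Theta(n)$ into the $\mathcal{O}(m^2 \cdot n \cdot l)$ bound of Proposition~\ref{verification} yields $\mathcal{O}(m^2 \cdot n^2)$. The preprocessing --- computing $\pi^E$ by deciding $\tau_a(q') \subseteq A$ for each $a$ and $q'$, and assembling $\mathcal{S}^{EF}$ --- fits comfortably inside this bound.

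The main obstacle is not the arithmetic, which is an immediate corollary of Proposition~\ref{verification} once the right formula is identified; it is getting the reduction exactly right. One has to check that the universal quantification over ``all computations in $out(q,\Gamma_A)$ and all their states'' in Definition~\ref{independent} matches the ``for all $\lambda \in out(q,\Gamma_A)$ and all positions $i \geq 0$'' of the $\langle A \rangle \Box$ clause, that the $InA_a$ labelling genuinely records $\tau_a(q') \subseteq A$ at every reachable state including the initial $\langle \bot, q \rangle$, and that the $\mathcal{S}^{EF}$-construction commutes with adding the $InA_a$ propositions (so that $\mathcal{S}^{EF}$ is just ``$\mathcal{S}^E$ with the $followed$ machinery of Proposition~\ref{verification}''). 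The one genuinely new point relative to Proposition~\ref{verification} is that the invariant to be enforced here, $\bigwedge_{a \in A} InA_a$, has size $\Theta(n)$ rather than constant, and this is exactly where the extra factor of $n$ in $\mathcal{O}(m^2 \cdot n^2)$ comes from.
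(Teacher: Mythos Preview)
Your proposal is correct and follows essentially the same route as the paper: build $\mathcal{S}^{EF}$, reduce structural independence to an ATL$^*$ invariance check at $\langle \bot, q\rangle$, and read off $\mathcal{O}(m^2 \cdot n^2)$ from the size-$\Theta(n)$ invariant. The paper writes the target formula as $\llangle A \rrangle \bigwedge_{a \in A} \Box (followed_a \land InA_a)$ rather than your $\llangle A \rrangle(\bigwedge_{a \in A}\Box followed_a \land \Box\bigwedge_{a \in A} InA_a)$, but these are equivalent and your derivation of the extra $n$ factor is in fact more explicit than the paper's one-line justification.
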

\begin{proof}
Given a SOMAS $(\mathcal{S},\Gamma)$, a state $q$ and a set of agents $A$, $A$ is an independent component with respect to $out(q,\Gamma_A)$ iff the state $\langle \bot, q \rangle$ of the extended game structure $\mathcal{S}^{EF}$ satisfies the following ATL$^*$ formula:
\[ \mathcal{S}^{EF}, \langle \bot, q \rangle \models \llangle A \rrangle \bigwedge_{a \in A} \Box (followed_a \land InA_a)\]
As we have to check $InA_a$ for each agent in $A$, checking the above formula is in the complexity of $\mathcal{O}(m^2 \cdot n^2)$.
\end{proof}
With the results of Propositions \ref{verification} and \ref{verification2}, we can measure the complexity of verifying whether a coalition of agents has full contribution to a temporal formula in a certain state.
\begin{proposition}
The model-checking problem for verifying the full contribution of a coaltion of agents can be solved in time $\mathcal{O}(m^2 \cdot n^2 \cdot l \cdot 2^n)$ for a self-organizing multi-agent system with $m$ states, a coalition of agents $|A|=n$ and a formula of length $l$.
\end{proposition}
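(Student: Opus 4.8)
The plan is to reduce the full-contribution check to a bounded number of calls to the two model-checking procedures of Propositions~\ref{verification} and~\ref{verification2}. By Definition~\ref{fullcontribution}, a coalition $A$ with $|A| = n$ has full contribution to $\psi$ in $q$ exactly when: (i) $A$ is semantically independent with respect to $\psi$ from $q$, i.e.\ $\mathcal{S},\Gamma,q \models \langle A \rangle \psi$; (ii) $A$ is structurally independent from $q$, i.e.\ $A$ is an independent component w.r.t.\ $out(q,\Gamma_A)$; and (iii) $A$ is minimal w.r.t.\ set-inclusion among the coalitions satisfying (i) and (ii), which, since the only candidates strictly below $A$ are its subsets, amounts to: no proper subset $B \subsetneq A$ satisfies both (i) and (ii).

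First I would test (i) and (ii) for $A$ itself. By Proposition~\ref{verification} the test $\mathcal{S},\Gamma,q \models \langle A \rangle \psi$ runs in time $\mathcal{O}(m^2 \cdot n \cdot l)$, and by Proposition~\ref{verification2} the structural-independence test for $A$ runs in time $\mathcal{O}(m^2 \cdot n^2)$. If either fails I report that $A$ does not have full contribution; otherwise I proceed to the minimality test.

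For minimality I would enumerate the $2^n - 1$ proper subsets $B \subsetneq A$ and, for each, run both procedures---the semantic test for $\langle B \rangle \psi$ and the structural test for $out(q,\Gamma_B)$, with the corresponding extended structures built relative to $B$ (this construction is subsumed by the stated running times). Each such pair of tests costs $\mathcal{O}(m^2 \cdot |B| \cdot l) + \mathcal{O}(m^2 \cdot |B|^2) = \mathcal{O}(m^2 \cdot n^2 \cdot l)$, using $|B| \le n$ and the fact that $n^2 l$ dominates both $nl$ and $n^2$. The coalition $A$ has full contribution iff (i) and (ii) hold for $A$ and none of the enumerated $B$ passes both tests; summing the per-subset cost over the $2^n - 1$ subsets yields the overall bound $\mathcal{O}(m^2 \cdot n^2 \cdot l \cdot 2^n)$.

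The main obstacle is clause (iii): semantic independence is upward-closed by Proposition~\ref{bigger}, but the conjunction of semantic and structural independence is not monotone in a way that would let us skip subsets, so verifying minimality seems to genuinely require inspecting every subset of $A$, and the factor $2^n$ in the coalition size is the price paid for that. One could prune the enumeration somewhat using the upward-closure of semantic independence together with closure of independent components under non-empty intersection, but this does not change the worst-case bound.
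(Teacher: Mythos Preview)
Your proposal is correct and follows essentially the same approach as the paper: combine the $\mathcal{O}(m^2 \cdot n \cdot l)$ semantic check (Proposition~\ref{verification}) with the $\mathcal{O}(m^2 \cdot n^2)$ structural check (Proposition~\ref{verification2}) into a single $\mathcal{O}(m^2 \cdot n^2 \cdot l)$ cost, then iterate over all $2^n$ subsets of $A$ to enforce minimality. Your write-up is in fact more careful than the paper's own proof in justifying the $n^2 l$ dominance and in noting that monotonicity does not let you avoid the $2^n$ factor.
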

\begin{proof}
Given a SOMAS $(\mathcal{S},\Gamma)$, a state $q$, a set of agents $A (|A|=n)$ and a temporal formula $\psi$ with the length of $l$, we need to follow Definition \ref{fullcontribution} to verify whether coalition $A$ has full contribution to $\psi$ in $q$. Since we know that the model-checking problem for ATL-$\Gamma$ can be solved in time $\mathcal{O}(m^2 \cdot n \cdot l)$, and that the model-checking problem for structural independence can be solved in time $\mathcal{O}(m^2 \cdot n^2)$, checking semantic and structural independence can be solved in time $\mathcal{O}(m^2 \cdot n^2 \cdot l)$. Moreover, we need to ensure that $A$ is the minimal coalition that is both semantically independent and structurally independent. Hence, we have to check every subset of $A$ for its semantic independence and structural independence. Therefore, checking the full contribution of a coalition of agents in a state is in the complexity of $\mathcal{O}(m^2 \cdot n^2 \cdot l \cdot 2^n)$.
\end{proof}

\section{Decomposing a Self-organizing Multi-agent System: a Layered Approach} \label{decomposition}
In Section \ref{abstractframework} we explored the dependence relation between agents from both structural and semantic perspectives. However, we still have no idea how agents guided by their respective local rules bring about the global behavior of the system. If we check formula $\mathcal{S},\Gamma,q \models \langle \Sigma \rangle \psi$ and it returns true, it is just proved that all the agents in the system have full contribution to the global behavior of the system $\psi$, which does not explain anything. For sure, we can enumerate all the possible coalitions of agents to check their contributions to the global behavior of the system, but it might be computationally expensive if the system has a large number of agents. Inspired by the decomposition approach in argumentation \cite{liao2013efficient}, we propose to decompose the system into different coalitions based on their dependence relation such that there is a partial order among different coalitions. Since a directed graph with nodes and arrows can better represent the dependence relation, we define a notion of a \emph{dependence graph} w.r.t. a set of computations. Formally, 
\begin{definition}[Dependence Graph]\label{graph}
Given an SOMAS $(\mathcal{S},\Gamma)$, a dependence graph w.r.t. a set of computations $out$ is a directed graph $G(V, E)$ where $V=\Sigma$ and $E=\{ (a,b) \mid$ there exists $\lambda \in out$ and $q \in \lambda$ such that $ a \in \tau_b(q)\}$. Typically, given a state $q$ and a coalition of agents $A$, when $G$ is a dependence graph w.r.t. a set of computations $out(q,\Gamma_A)$, we will denote it as $G(q,\Gamma_A)$.
\end{definition}
In words, a dependence graph w.r.t. a set of computations consists of a set of nodes $V$, which are the agents in $\mathcal{S}$, and a set of arrows $E$, each of which indicates one agent gets input from another agent in a state along a computation in $out$. Having this graph allows us to analyze the dependence relation between agents and the dependence relation between coalitions of agents with respect to a set of computations.
\begin{proposition}
Given an SOMAS $(\mathcal{S},\Gamma)$, a coalition $A$ is an independent component w.r.t. a set of computations $out$ iff $A$ does not get input from agents outside of $A$ in the corresponding dependence graph $G$.
\end{proposition}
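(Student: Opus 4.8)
The plan is to prove both directions by directly unfolding Definition~\ref{independent} (independent component w.r.t.\ a set of computations) and Definition~\ref{graph} (dependence graph), since the claimed equivalence is essentially a restatement of the same condition in graph-theoretic language. The one thing to fix before starting is the orientation of the edges: by Definition~\ref{graph} an edge $(a,b) \in E$ records that $a \in \tau_b(q)$ for some $q$ on some $\lambda \in out$, i.e.\ that $b$ \emph{gets input from} $a$. So ``$A$ does not get input from agents outside of $A$'' means precisely: there is no edge $(a,b) \in E$ with $a \notin A$ and $b \in A$ (no incoming edge into $A$ from $\Sigma \setminus A$). I would state this convention explicitly at the top of the proof and then use it consistently.

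For the left-to-right direction, assume $A$ is an independent component w.r.t.\ $out$ and take an arbitrary edge $(a,b) \in E$ with $b \in A$. By the definition of $E$ there is some $\lambda \in out$ and some $q \in \lambda$ with $a \in \tau_b(q)$. Since $A$ is an independent component w.r.t.\ $out$, it is in particular an independent component w.r.t.\ that state $q$, so $\tau_b(q) \subseteq A$, hence $a \in A$. Thus no edge enters $A$ from outside, which is exactly the graph condition.

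For the right-to-left direction, assume $A$ receives no incoming edge from $\Sigma \setminus A$ in $G$ and verify the defining condition of an independent component w.r.t.\ $out$. Fix any $\lambda \in out$, any $q \in \lambda$, any $a \in A$ with local rule $\langle \tau_a, \gamma_a \rangle$, and any $b \in \tau_a(q)$. Then $(b,a) \in E$ by the definition of $E$; since $a \in A$ and there is no incoming edge into $A$ from outside, $b \in A$. As $b \in \tau_a(q)$ was arbitrary, $\tau_a(q) \subseteq A$, so $A$ is an independent component w.r.t.\ $q$; and as $q$ and $\lambda$ were arbitrary, $A$ is an independent component w.r.t.\ $out$.

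No step here is genuinely difficult; the proof is a pure definition chase. The only real pitfall is the edge orientation — mixing up ``gets input from'' with ``sends input to'' would flip the statement — so the main care goes into pinning down that convention once and applying it the same way in both directions.
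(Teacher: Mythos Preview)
Your proof is correct and follows essentially the same approach as the paper's: a direct unfolding of Definitions~\ref{independent} and~\ref{graph} into a chain of equivalences. The paper compresses both directions into a single iff-chain, while you spell out each direction and the edge orientation explicitly, but the underlying argument is identical.
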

\begin{proof}
Coalition $A$ is an independent component w.r.t. a set of computations $out$ iff for all $\lambda \in out$ and $q \in \lambda$ $A$ is an independent component w.r.t. state $q$ iff for all $\lambda \in out$ and $q \in \lambda$ and $a \in A$ it is the case that $\tau_a(q) \subseteq A$, which means that for all $\lambda \in out$ and $q \in \lambda$ and $a \in A$ it is the case that it is the case that $a$ does not get input from agents in $\Sigma \backslash A$, iff any $a \in A$ does not get input from agents outside of $A$ in the corresponding dependence graph $G$.
\end{proof}
From that, we can check whether a coalition of agents is an independent component w.r.t. $out$ through simply checking its corresponding dependence graph. 
\begin{proposition}\label{subsetcomput}
Given an SOMAS $(\mathcal{S},\Gamma)$ and two sets of computations $out$, if a coalition of agents $A$ is an independent component w.r.t. a set of computations $out$, then $A$ is also an independent component w.r.t. any set of computations $out^\prime$ where $out^\prime \subseteq out$.
\end{proposition}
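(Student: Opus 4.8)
The plan is to unfold Definition~\ref{independent} and exploit the fact that ``$A$ is an independent component with respect to a set of computations'' is, by definition, a statement universally quantified over the computations in that set; such statements are automatically preserved when the set is made smaller.

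First I would recall the precise reading of Definition~\ref{independent}: the coalition $A$ is an independent component with respect to $out$ exactly when for every computation $\lambda \in out$ and every state $q$ occurring on $\lambda$, $A$ is an independent component with respect to the state $q$ (that is, $\tau_a(q) \subseteq A$ for all $a \in A$). Next, assuming $A$ is an independent component with respect to $out$ and given any $out^\prime \subseteq out$, I would take an arbitrary $\lambda \in out^\prime$ and an arbitrary state $q \in \lambda$; since $out^\prime \subseteq out$ we have $\lambda \in out$, so the hypothesis applies to this very $\lambda$ and $q$ and yields that $A$ is an independent component with respect to $q$. As $\lambda$ and $q$ were arbitrary, $A$ is an independent component with respect to $out^\prime$, which is exactly the conclusion.

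The only obstacle worth flagging is purely bookkeeping: one must read the quantifier structure of Definition~\ref{independent} correctly (universal over $\lambda \in out$, and for each such $\lambda$ universal over $q \in \lambda$), after which the result is immediate---there is no case analysis, and the argument uses nothing about the state-level condition $\tau_a(q) \subseteq A$ beyond the fact that it is inherited verbatim from $out$ to $out^\prime$. More generally, the same reasoning shows that ``being an independent component'' is a downward-closed property of sets of computations, so the statement as phrased (with $out^\prime$ ranging over \emph{all} subsets of $out$) follows at once.
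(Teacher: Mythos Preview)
Your argument is correct: you work directly from Definition~\ref{independent}, observing that the condition is universally quantified over $\lambda \in out$ and $q \in \lambda$, and hence is trivially preserved under shrinking $out$.

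The paper's own proof takes a slightly different route. It invokes the dependence-graph characterization established just before Proposition~\ref{subsetcomput}: it lets $G$ and $G'$ be the dependence graphs of $out$ and $out'$ respectively, notes that $out' \subseteq out$ implies $G'$ is a spanning subgraph of $G$, and concludes that if $A$ receives no input from outside $A$ in $G$ then the same holds in $G'$. Your approach is more elementary---it uses nothing beyond the raw Definition~\ref{independent} and requires no detour through Definition~\ref{graph}---whereas the paper's argument illustrates how the graph encoding makes such monotonicity statements visually immediate (fewer edges, same vertex set). Both are equally valid; yours is arguably the cleaner one here since the graph machinery adds no real leverage for this particular statement.
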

\begin{proof}
Let $G$ be the dependence graph of $out$ and $G^\prime$ be the dependence graph of $out^\prime$. Because $out^\prime \subseteq out$, $G^\prime$ is a spanning subgraph of $G$. Therefore, if $A$ does not get input from other agents in $G$, $A$ will also not get input from other agents in $G^\prime$, which means that if $A$ is an independent component w.r.t. $out$, then $A$ is also an independent component w.r.t. $out^\prime$.
\end{proof}
Typically, given an SOMAS $(\mathcal{S},\Gamma)$, if a coalition of agents $A$ is an independent component w.r.t. $out(q, \Gamma_A)$, then $A$ is also an independent component w.r.t. $\lambda^*(q)$; if $A$ is not an independent component w.r.t. $\lambda^*(q)$, then $A$ is also not an independent component w.r.t. $out(q, \Gamma_A)$. In this section, we will use a more complicated example illustrate our definitions.
\begin{example}
A multi-agent system consists of a couple of agents, each of which has specific capacity that is not common knowledge among agents. A complicated task is delegated to the agents and the agents have to cooperate to finish the task in a self-organizing way. The local rule for each agent is as follows: each agent works on one part of the task based on its capacity; once its part is finished, it passes the rest of the task to other agents who can continue via wireless signals until the whole task is finished. We will use our framework to study the contributions of agents and how they cooperate to finish the whole task. Given agents' capacities, the whole task is finished along the computation $\lambda^*(q)$. The dependence graph $G$ w.r.t. $\lambda^*(q)$ is represented as Figure.\ref{figure_3}(left), which consists of 5 agents $\{a,b,c,d,e\}$ and communication between them regarding the task information. 
\begin{figure}
\centering
\includegraphics[width=0.8\textwidth]{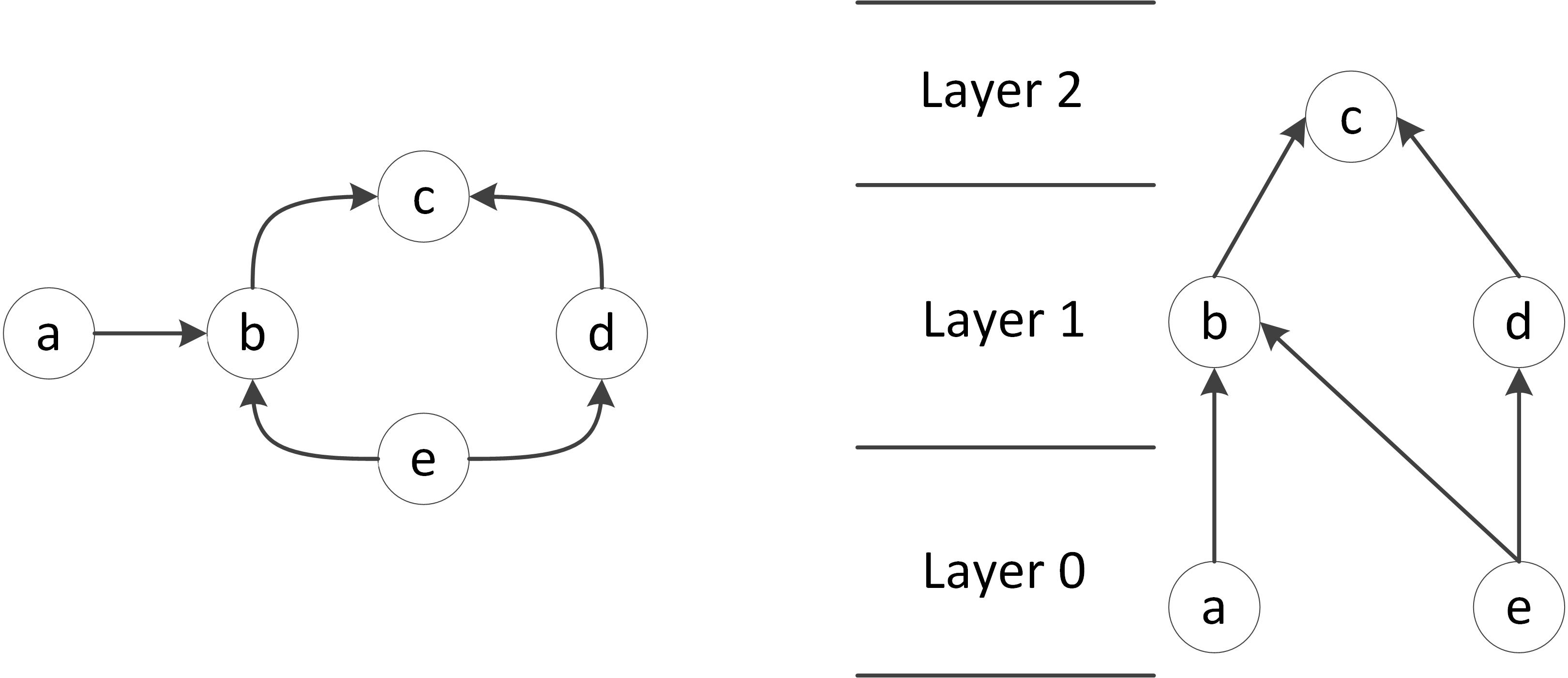}
\caption{A dependence graph (left) and a layered decomposition (right).}\label{figure_3}
\end{figure}
\end{example}

A dependence graph clearly illustrates not only the dependence relation between agents. When there exists circles in a dependence graph, we can shrink them down to single nodes based on the theory about strongly connected components and condensation graphs in graph theory. Because it is not the main concern of this paper, we omit the process of transforming a dependence graph to a condensation graph and assume that the dependence graph under consideration is a directed acyclic graph. As we mentioned before, we want to decompose the system such that there is a partial order of dependence among coalitions of agents. How to decompose the system based on the dependence graph becomes a problem we need to solve now. Similar to the solution in \cite{liao2013efficient}, we use a layering decomposition approach in this paper. We first propose the notion of \emph{layer}. Formally, 
\begin{definition}[Layer of an Agent]\label{layer}
Given a dependence graph $G(V,E)$ w.r.t. a set of computations $out$, for all $a \in V$, the layer of $a$ is a function $\rho: V \to \mathbb{N}$, and $\rho(a)$ is defined as:
\begin{itemize}
\item if $a$ has no parent in $G$, then $\rho(a)=0$;
\item otherwise, $\rho(a)=\max\{\rho(p)+1: p \text{ is a parent of } a\}$,
\end{itemize}
We use $h=\max\{\rho(a): a \in V\}$ to denote the highest layer.
\end{definition}
The above definition indicates the layer of an agent in a given dependence graph in two cases. If $a$ is a leaf node in the dependence graph $G$, then $a$ is located in the lowest layer such that it is independent on the behavior of any other agents. If $a$ is not a leaf node, then $a$ is located in the layer above all of its parents. A self-organizing multi-agent system can be decomposed into a number of layers. Formally,
\begin{definition}[Decomposition of an SOMAS]\label{decomposition}
Given an SOMAS $(\mathcal{S},\Gamma)$, a decomposition of $(\mathcal{S}, \Gamma)$ w.r.t. a set of computations $out$, denoted as $\decomp{\mathcal{S}, \Gamma, out}$, is a tuple 
\[\decomp{\mathcal{S}, \Gamma, out}=(L_0, L_1, \cdots, L_{h}),\] 
where $L_i=\{a \in \Sigma \mid \rho(a)=i\}$ ($0 \leq i \leq h$).
\end{definition} 
Using this decomposition approach, any agents in the same layer are independent on each other, and any agents in a given layer are only dependent on the agents located in the lower layers. It is expressed by the following proposition:
\begin{proposition}\label{prop:independence}
Given an SOMAS $(\mathcal{S}, \Gamma)$ and a decomposition $\decomp{\mathcal{S}, \Gamma, out}$, for any $a \in L_i, b \in L_j$ ($0 \leq i,j \leq h$):
\begin{itemize}
\item if $i = j$, then $a$ and $b$ do not get input from each other;
\item if $b$ gets input from $a$, then $i < j$.
\end{itemize}
\end{proposition}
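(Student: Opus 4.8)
The plan is to reduce both bullets to the recursive definition of the layer function $\rho$ in Definition \ref{layer}, using the standing assumption (made just before that definition) that the dependence graph under consideration is a directed acyclic graph, which is exactly what makes $\rho$ well-defined. First I would make the translation explicit: by Definition \ref{graph}, the phrase ``$b$ gets input from $a$'' along a set of computations $out$ means precisely that $(a,b) \in E$ in the dependence graph $G$ w.r.t. $out$, i.e. that $a$ is a parent of $b$ in $G$. Hence the whole proposition is a purely graph-theoretic statement about $G$ and the numbering $\rho$.

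For the second bullet, I would assume $b$ gets input from $a$, so $a$ is a parent of $b$, and in particular $b$ has at least one parent. Then the second clause of Definition \ref{layer} applies, giving $\rho(b) = \max\{\rho(p)+1 : p \text{ is a parent of } b\}$. Since $a$ is one of the parents over which the maximum is taken, $\rho(b) \geq \rho(a)+1 > \rho(a)$, that is, $j = \rho(b) > \rho(a) = i$, which is exactly $i < j$. I would also record here why $\rho(a)$ and $\rho(b)$ are well-defined natural numbers: since $G$ is acyclic, one argues by induction on the length of the longest directed path terminating at a node — the base case being the source nodes, which receive layer $0$ — that $\rho$ assigns each node a unique value, so the expressions above make sense.

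The first bullet is then the contrapositive of the second. Suppose $i = j$. If $b$ got input from $a$, the second bullet would yield $i < j$; if $a$ got input from $b$, it would yield $j < i$; either way this contradicts $i = j$. Therefore neither agent gets input from the other, which is the meaning of ``$a$ and $b$ do not get input from each other''.

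I do not anticipate a substantive obstacle; the argument is essentially an unfolding of definitions. The one point that deserves care — and that I would state rather than leave implicit — is the well-definedness of $\rho$, which genuinely relies on acyclicity of the dependence graph and hence on the condensation-graph reduction mentioned in the text. Once that is granted, both bullets follow immediately.
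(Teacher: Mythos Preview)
Your proposal is correct and follows essentially the same approach as the paper: both arguments reduce the claim to the recursive definition of $\rho$ and the observation that a parent of $b$ has strictly smaller layer than $b$. The only cosmetic difference is that the paper proves the first bullet directly via a small case split on $i=0$ versus $i>0$, whereas you obtain it more economically as the contrapositive of the second bullet; the mathematical content is identical.
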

\begin{proof}
If $i=j$, agents $a$ and $b$ are on the same level. For $a$ located on $L_i$, if $i=0$, it has no parent, thus it does not get input from $b$; if $i>0$, the parents of $a$ are located below $L_i$, thus $b$ cannot reach $a$, meaning that $a$ does not get input from $b$. Similarly, $b$ does not get input from $a$. If $b$ gets input from $a$, then there exists an arrow coming from $a$ to $b$. According to our decomposition approach, $a$ is located below $b$. Hence, $i < j$.
\end{proof}
Thus, we organize the system in a way where there is a partial order of dependence among agents located in different layers. This layered structure helps us to find coalitions of agents which do not get input from other agents from the dependence graph efficiently. Starting from $L_0$, any coalitions located in $L_0$ do not get input from other agents. We then go to $L_1$, where each agent combining with its parents located in lower layers does not get input from other agents. After that, we go to a higher level until we reach $L_h$. We have the following theorem to check whether a coalition of agents has full contribution to a temporal formula in a state.
\begin{theorem}\label{prop:theorem}
Given an SOMAS $(\mathcal{S}, \Gamma)$, a coalition of agents $A$ fully contributes to a temporal formula $\psi$ in a state $q$ iff $A$ satisfies the following conditions
\begin{itemize}
\item $A$ does not get input from agents in $\Sigma \backslash A$ in the dependence graph $G(q,\Gamma_A)$;
\item $\mathcal{S}, \Gamma, q \models \langle A \rangle \psi$;
\item for any $A^\prime \subset A$ if it does not get input from agents in $\Sigma \backslash A^\prime$ in $G(q,\Gamma_{A^\prime})$ then $\mathcal{S}, \Gamma, q \not \models \langle A^\prime \rangle \psi$.
\end{itemize}
\end{theorem}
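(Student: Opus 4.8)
The plan is to show that the three bulleted conditions in the theorem are just an unfolding of Definition~\ref{fullcontribution} (full contribution) once we rewrite ``structurally independent'' and ``semantically independent'' in graph-theoretic terms. Recall that, by Definition~\ref{fullcontribution}, $A$ fully contributes to $\psi$ in $q$ iff $A$ is the minimal (w.r.t.\ set-inclusion) coalition that is both semantically independent w.r.t.\ $\psi$ from $q$ (i.e.\ $\mathcal{S},\Gamma,q\models\langle A\rangle\psi$) and structurally independent from $q$ (i.e.\ $A$ is an independent component w.r.t.\ $out(q,\Gamma_A)$). So the proof is an ``iff'' established by peeling the definition apart into: (a) $A$ itself is structurally independent; (b) $A$ itself is semantically independent; (c) minimality, i.e.\ no proper subset of $A$ is simultaneously structurally and semantically independent.

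The first step is to translate condition (a). By the proposition relating independent components to dependence graphs (the one proved right after Definition~\ref{graph}), $A$ is an independent component w.r.t.\ $out(q,\Gamma_A)$ iff $A$ does not get input from agents outside $A$ in $G(q,\Gamma_A)$. That is precisely the first bullet of the theorem, so (a) and the first bullet are interchangeable. The second step is trivial: (b) is \emph{verbatim} the second bullet, $\mathcal{S},\Gamma,q\models\langle A\rangle\psi$, by the definition of semantic independence. The third step handles minimality: I would observe that by the same graph proposition and the definition of semantic independence, a proper subset $A'\subset A$ is both structurally and semantically independent iff $A'$ does not get input from $\Sigma\setminus A'$ in $G(q,\Gamma_{A'})$ \emph{and} $\mathcal{S},\Gamma,q\models\langle A'\rangle\psi$. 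Minimality of $A$ then says: for every $A'\subset A$, \emph{not} both of these hold; contrapositively, if $A'$ does not get input from $\Sigma\setminus A'$ in $G(q,\Gamma_{A'})$, then $\mathcal{S},\Gamma,q\not\models\langle A'\rangle\psi$ --- which is exactly the third bullet. Assembling (a)$\wedge$(b)$\wedge$(c) $\Leftrightarrow$ bullet~1 $\wedge$ bullet~2 $\wedge$ bullet~3 gives the equivalence and completes the proof.

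The delicate point --- the one I would spell out carefully rather than wave at --- is the logical form of the minimality clause and, relatedly, making sure the dependence-graph reformulation of structural independence is applied to the right set of computations. ``Minimal coalition satisfying property $P$'' means $A$ satisfies $P$ and no $A'\subsetneq A$ satisfies $P$, where here $P(X) = $``$X$ is structurally independent from $q$ and $\mathcal{S},\Gamma,q\models\langle X\rangle\psi$''. One must be careful that structural independence of $A'$ is judged against $out(q,\Gamma_{A'})$ --- i.e.\ the dependence graph $G(q,\Gamma_{A'})$ built from the computations in which $A'$ (not $A$) follows its local rules --- which is why the third bullet correctly writes $G(q,\Gamma_{A'})$ and not $G(q,\Gamma_A)$. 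It would be a genuine error to quantify over subgraphs of the single graph $G(q,\Gamma_A)$, since shrinking the obedient coalition can change which states are visited and hence which communication edges appear (cf.\ Proposition~\ref{subsetcomput}). Once this is pinned down, the rest is a routine substitution of equivalent phrasings, and no separate induction or new construction is needed.
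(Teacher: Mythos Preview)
Your proposal is correct and follows essentially the same approach as the paper's own proof: both translate the first bullet into structural independence via the proposition immediately following Definition~\ref{graph}, identify the second bullet with semantic independence verbatim, and unpack the third bullet as the minimality clause (every proper subset fails structural or semantic independence). Your treatment is in fact a bit more explicit than the paper's---particularly in spelling out the contrapositive for minimality and in flagging that $A'$ must be tested against $G(q,\Gamma_{A'})$ rather than $G(q,\Gamma_A)$---but there is no substantive difference in method.
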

\begin{proof}
For the first condition, coalition $A$ does not get input from agents in $\Sigma \backslash A$ in the dependence graph $G(q,\Gamma_A)$ if and only if coalition $A$ is an independent component w.r.t. $out(q,\Gamma_A)$, which means the coalition $A$ is structurally independent. The second condition ensures that $A$ is semantically independent w.r.t. $\psi$. For the third condition, for any subsets of $A$ $A^\prime \subset A$, if it gets input from agents in $\Sigma \backslash A^\prime$ in $G(q,\Gamma_{A^\prime})$, it is not structurally independent; if it does not get input from agents in $\Sigma \backslash A^\prime$ in $G(q,\Gamma_{A^\prime})$ and $\mathcal{S}, \Gamma, q \not \models \langle A^\prime \rangle \psi$, it is structurally independent but not semantically independent. Thus, for any $A^\prime \subset A$, it is either not structurally independent or not semantically independent, which means that $A$ is the minimal (w.r.t. set-inclusion) coalition that is both semantically independent with respect to $\psi$ and structurally independent from $q$. Therefore, $A$ fully contributes to a temporal formula $\psi$ in a state $q$.
\end{proof}
It indicates how to verify whether a coalition of agents $A$ has full contribution to $\psi$ in state $q$: structurally, we need to check whether $A$ does not get input from agents outside $A$ in the dependence graph of $out(q,\Gamma_A)$; semantically, we need to check whether $A$ ensures $\psi$ by following local rules; for any of its proper subsets $A^\prime$, if $A^\prime$ has been confirmed to be structurally independent w.r.t. $out(q,\Gamma_{A^\prime})$, then we need to check whether it is not semantically independent w.r.t. $\psi$. In other words, for any proper subsets of $A$, we need to ensure it is either not structurally independent or not semantically independent w.r.t. $\psi$ from state $q$. 

Given a finite set of temporal formulas tformulas that we would like to verify, we need to follow a procedure to efficiently examine the contribution of agents to the global system behavior rather than checking an arbitrary coalition. The results from Proposition \ref{prop:independence} and Theorem \ref{prop:theorem} can be used to design such a procedure. According to Proposition \ref{subsetcomput} if a coalition of agents $A$ is not an independent component w.r.t. $\lambda^*(q)$ then it is also not an independent component w.r.t. $out(q,\Gamma_A)$. Thus, we investigate the dependence graph $G$ of $\lambda^*(q)$ so that we can rule out the coalitions that are not independent components in $G$. Given a computation $\lambda^*(q)$ and its corresponding dependence graph $G$, we first find out all the coalitions of agents that do not get input from other agents according to the result of Proposition \ref{prop:independence}, putting them in the order of set inclusion. From the first coalition to the last coalition in the order, we follow Theorem \ref{prop:theorem} to check whether it has full contribution to any temporal property in tformulas. In this part, after we confirm that a coalition of agents $A$ is structurally and semantically independent w.r.t. a temporal formula $\psi$, we check whether $A^\prime \subset A$, which does not get input from other agents in $G(q,\Gamma_{A^\prime})$, also bring about $\psi$. If no, then $A$ is the minimal coalition (w.r.t. set-inclusion) coalition that is both semantically independent with respect to $\psi$ and structurally independent from $q$. We provide the following pseudocode to illustrate the process.
\begin{algorithm}
\caption{Finding Agents'Full Contributions in an SOMAS}
\begin{algorithmic}
\Function{FConSOMAS}{$\mathcal{S},\Gamma,q,tformulas$}
\State def tformulas;
\State def hashmap = FindIndependentComponent($\lambda^*(q)$);

\For{each $A$ in hashmap}
	\If{$A$ is not an independent component w.r.t. $out(q,\Gamma_A)$}
		\State hashmap.remove(A);
	\EndIf
	\For{each $\psi$ in tformulas}
		\If{$\mathcal{S},\Gamma,q \models \psi$}
			\For{each $A^\prime \subset A$}
				\If{hashmap.get($A^\prime$)!= null \& $\mathcal{S},\Gamma,q \not \models \psi$}
					\State hashmap.put($A$, $\psi$);
				\EndIf
			\EndFor
		\EndIf
	\EndFor
\EndFor
\State \Return hashmap;
\EndFunction
\end{algorithmic} 
\end{algorithm}

When it is finished, we collect the information about agents' full contributions in a set, which allows us to understand how different coalitions of agents contribute to the global system behavior. 
\begin{definition}[Agents' Independence in Terms of Full Contributions]
Given an SOMAS $(\mathcal{S}, \Gamma)$ and a state $q$, agents' independence in terms of their full contributions in $(\mathcal{S}, \Gamma)$ is a set $F(q)$ containing the information of agents' full contributions, namely $F(q)=\{(A,\psi_A) \mid$ Coalition $A$ has full contribution to $\psi_A$ in state $q$. $\}$.
\end{definition}
\begin{example}
A decomposition of dependence graph $G$ in Fig.\ref{figure_3}(right) is as follows. According to Definition \ref{layer}, $\rho(a)=0$, $\rho(b)=1$, $\rho(c)=2$, $\rho(d)=1$ and $\rho(e)=0$. Therefore, the system is decomposed into three layers: $\decomp{\mathcal{S}, \Gamma}=(L_0, L_1,L_2)$, where $L_0=\{a,e\}$, $L_1=\{b,d\}$ and $L_2=\{c\}$. Based on the decomposition, from $L_0$ to $L_2$, we can find multiple independent components, namely $\{a\}$, $\{e\}$, $\{a,e\}$, $\{a,b,e\}$, $\{d,e\}$, $\{a,d,e\}$, $\{a,b,d,e\}$ and $\Sigma$. We then examine whether they fully contribute to any temporal formula one by one, which does a lot of savings over checking $2^5$ possible coalitions without using the dependence graph. Starting from $\{a\}$, we check whether agent $a$ also does not get input from other agents in the dependence graph $G(q, \Gamma_{\{a\}})$, and check whether it ensures $\psi_a$ by following its local rule. If both return yes, then $a$ fully contributes to $\psi_a$. We perform the same examination for coalition $\{e\}$. For coalition $\{a,e\}$, after confirming that it is both semantically independent with respect to $\psi$ and structurally independent from $q$, we still need to check whether neither coalitions $\{a\}$ nor $\{e\}$ brings about $\psi_{ae}$ to ensure the minimization requirement. The verification is finished when we have done with $\Sigma$. Suppose we have the following result: $F(q)=\{(\{a\}, \psi_a), (\{d,e\}, \psi_{de}), (\{a,b,e\}, \psi_{abe}), (\Sigma, \psi)\}$, each of which indicates that a coalition has full contribution to a part of the task. Therefore, $F(q)$ contains the information on how different coalitions of agents finish the whole task. 
\end{example}

\section{Modeling Constraint Satisfaction Problems}
In mult-agent systems, it is normal that agents have personal goals and constraints. From the perspective of the system, we would like to find a solution such that we can balance the satisfaction of the goals and constraints among agents. In mathematics, those problems are called constraint satisfaction problems. Constraint satisfaction problems are NP-complete in general, which means that it will more and more computationally expensive to solve those problems in centralized way when we enlarge the size of the problem. After realizing that, people start to look at agents themselves and the cooperation among them: agents can negotiate as to reach a global equilibrium. Algorithms based on self organization have been developed to solve various constraint satisfaction problems, such as task/resource allocation \cite{macarthur2011distributed} and relation adaption \cite{ye2012self}. Formally, a constraint satisfaction problem is defined as a triple $(X,D,C)$, where 
\begin{itemize}
\item $X=\{x_1,\ldots,x_n\}$ is a set of variables; 
\item $D=\{D_1,\ldots,D_n\}$ is a set of domains; 
\item $C=\{C_1,\ldots,C_m\}$ is a set of constraints. 
\end{itemize}
Each variable $x_i$ can take on the values in its nonempty domain $D_i$. Every constraint $C_j \in C$ is a pair $(t_j,R_j)$, where $t_j \subset X$ is a subset of n variables and $R_j$ is the relation on $t_j$. An evaluation of the variables is a function $v$ from a subset of variables to a particular set of values in the corresponding subset of domains. An evaluation $v$ is said to satisfy a constraint $(t_j,R_j)$ if the values assigned to the variables $t_j$ satisfies the relation $R_j$. A solution to the constraint satisfaction problem is an evaluation that include all the variables and do not violate any of the constraints. 

Using our framework of self-organizing multi-agent systems to model a constraint satisfaction problem, we can see a complete evaluation that includes all the variables as a particular state in the system, and a self-organization based algorithm is a set of local rules we design for agents to follow. The goal of the system is to converge to a stable state where the values of $X$ do not violate any constraints, which is a feasible solution to the problem. Notice that set $C$ consists of multiple constraints. An evaluation that satisfies constraints $C^\prime$ always implies an evaluation that satisfies constraints $C^{\prime\prime}$ if $C^{\prime\prime} \subseteq C^\prime$. Next, we will use a real example to illustrate how we use our framework to model a constraint satisfaction problem.

Multi-agent systems can be used to support information exchange within user communities where each user agent represents a user's interests. But how does a user agent make contact with other user agents with common interests? We can use a peer-to-peer network, but this could flood the network with queries, risking overloading the system. A decentralized solution based on middle agents can self-organize to form communities among agents with common interests \cite{wang2002self}. User agents represent users and each user agent registers with one or more middle agent. User agents (requestors) send queries about their user’s interests to middle agents. Given these queries, each middle agent then search among the information it holds from other user agents registered with it. If it can respond to a query based on this information then the search is completed. Otherwise, the middle agent communicates with other middle agents in order to try and obtain the information. Once this has been done, the middle agent relays results of the search to the requester. After that, the middle agent checks whether both the requester and the provider are within its group. If not, the middle agent of the requester transfers the provider to its group, so that both user agents are in the same group. The consequence is that users with common interests form a community. The querying behavior of user agents builds up a profile of their interests, which is updated by successive queries. It is a solution with self-organization features that solves a constraint satisfaction problem that the middle agents where user agents register correspond to the set of variables $X$, the possible middle agents where user agents can register correspond to the domains of the variables $D$, and user agents' interests in other user agents correspond to the constraints $C$. Compared to centralizedly solving the problem that might encounter computational issues when the number of users increases, the above solution is a highly scalable process that operates efficiently with a large number of users. As user agents broadcast their queries in the system when implementing the solution, it is not required that the system designer is aware of user agents' interests beforehand.

\begin{figure}
\centering
\includegraphics[width=0.9\textwidth]{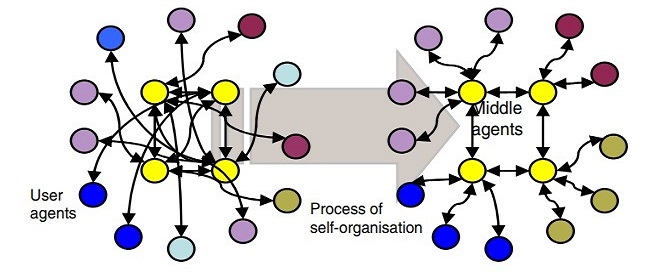}
\caption{Self organization of agent communities - different colours represent different user interests.}\label{figure_6}
\end{figure}

We now use our framework to model the example. The system has a set of user agents $U$ and a set of middle agents $M$; each middle agent can register $reg$, disregister $disreg$ a user agent or does nothing. Function $int:U \to \mathcal{P}(U)$ returns the static interest of each user agent $u \in U$. Each state $q$ is labeled with propositions, each of which is of the form $reg(u,m)$, meaning that user agent $u$ registers in middle agent $m$. We assume that middle agents are aware of the positions where user agents register, but not the interests of user agents before user agents broadcast their queries in the system. The system moves from one state to another state once a user agent registers or disregister with a user agent. The goal of the system is to converge to a state $q^*$ where all user agents are in their own user communities. As Fig. \ref{figure_6} described, a user community consists of user agent(s) and middle agent(s); for any user agent in the community, it registers with the same middle agent as its interested user agents do. In order to formalize this property, we extend our semantics as follows: given $U^\prime \subseteq U$ and $M^\prime \subseteq M$,
\begin{itemize}
\item $\mathcal{S},\Gamma,q \models \com{U^\prime}{M^\prime}$ iff for all $u \in U^\prime$ and $u^\prime \in int(u)$ there exists $m \in M^\prime$ such that $\mathcal{S},\Gamma,q \models reg(u,m)\land reg(u^\prime,m)$.
\end{itemize}

Local rule $\langle \tau, \gamma \rangle$ for each agent is prescribed as follows: if a user agent $u_1$ has interest in another user agent $u_2$ while they don't register with the same middle agent, then the middle agent of $u_2$  has to disregister $u_2$ and send the information about $u_2$ to the middle agent of $u_1$ so that it can register $u_2$. Suppose at the starting state $q_0$ user agents randomly register with different middle agents and each has interest in some other user agents (see Table \ref{table1}). Starting from $q_0$, $u_1$ asks a question about $u_2$. After hearing the query about $u_2$, $m_2$ disregisters $u_2$ and sends information about $u_2$ to $m_1$, which moves the system from state $q_0$ to state $q_1$. After getting the information about $u_2$ from $m_2$, $m_1$ registers $u_2$, which moves the system from state $q_1$ to state $q_2$. In state $q_2$, both $u_1$ and $u_2$ register with the same middle agent $m_1$. Later on, $u_2$ asks a question about $u_1$. Because $u_1$ and $u_2$ register with the same middle agent $m_1$, the system does not change anything from the query. And then $u_3$ asks a question about $u_4$. Similar to $u_2$, $m_1$ disregisters $u_4$ after hearing the query, and $m_3$ registers $u_4$ after getting the information about $u_4$ from $m_1$. And then $u_4$ asks a question about $u_3$, which does not trigger any changes to the system. Based on the local rules, the dependence graph between user agents and middle agents in terms of their communication along the computation $\lambda^*(q_0)$ is shown in Fig.~\ref{figure_7}. Triggered by the queries, all agents follow their local rules to behave and the system transits from $q_0$ to $q_4$. See Fig.~\ref{figure_8} for the state transitions of the system and the valuation of each state. Because in $q_4$ both $u_1$ and $u_2$ with common interests register with $m_1$ and both $u_3$ and $u_4$ with common interests register with $m_3$, $q_2$ satisfies the property of our desired state. Starting from a disorder state $q_0$, the system finally converges to state $q_4$ where for each user $u \in U$ it registers with the same middle agent as its interested user agents do, forming two user communities $\com{\{u_1,u_2\}}{\{m_1\}}$ and $\com{\{u_3,u_4\}}{\{m_3\}}$. How does the system reach this desired state? What are the agents' contributions to forming such two communities? Here is the analysis.

\begin{table}[htbp]
\centering 
\caption{User agents' interests and their initial registration positions}
\label{table1}
\begin{tabular}{|c|c|c|} 
\hline 
& & \\[-6pt] 
User Agent&Interest&Register in \\
\hline
& & \\[-6pt] 
$u_1$&$u_2$&$m_1$ \\
\hline
& & \\[-6pt] 
$u_2$&$u_1$&$m_2$ \\
\hline
& & \\[-6pt] 
$u_3$&$u_4$&$m_3$ \\
\hline
& & \\[-6pt] 
$u_4$&$u_3$&$m_1$ \\
\hline
\end{tabular}
\end{table}

\begin{figure}
\centering
\includegraphics[width=0.6\textwidth]{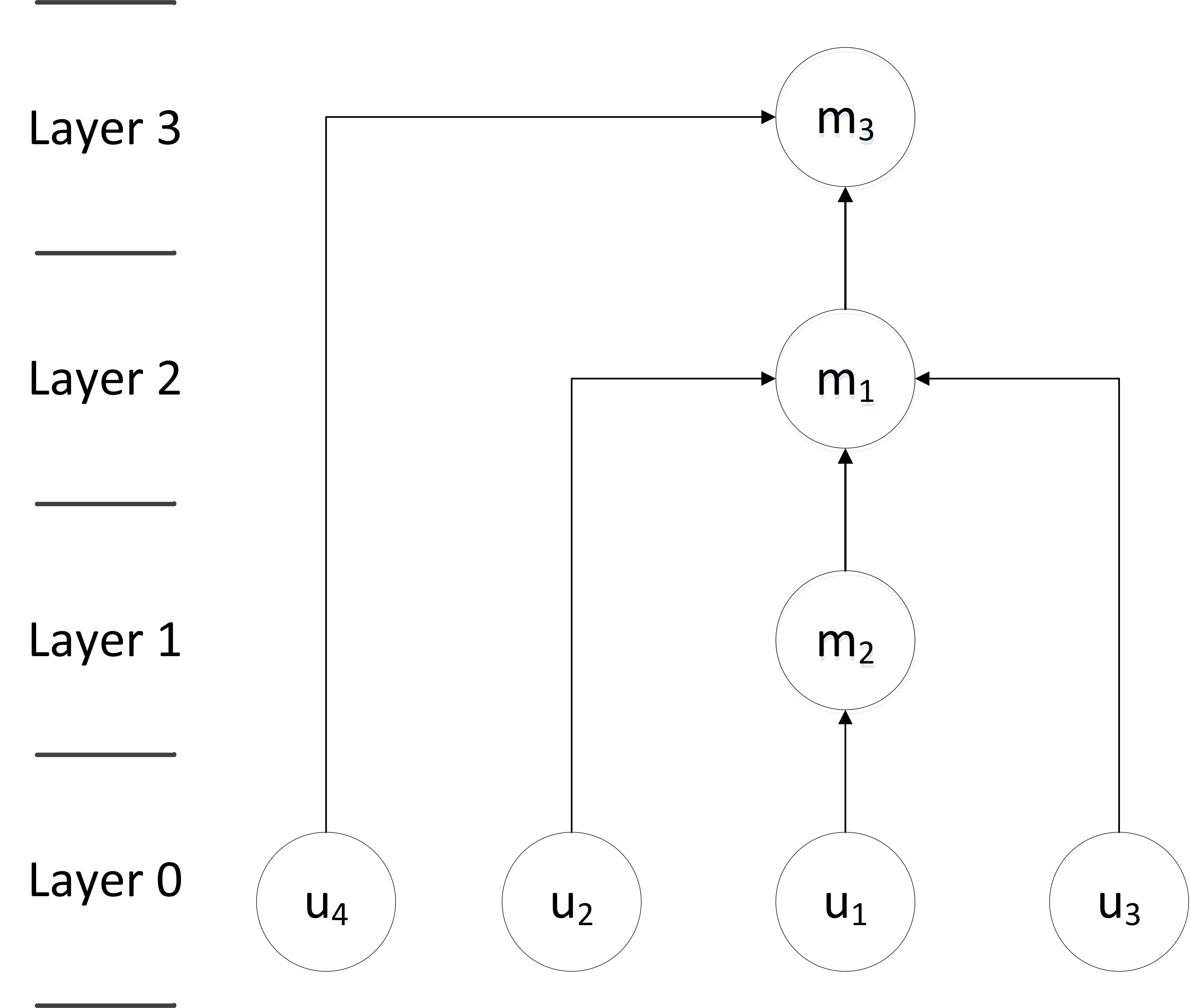}
\caption{Dependence graph between user agents and middle agents along the computation $\lambda^*(q_0)$, where all agents follow their local rules to behave.}\label{figure_7}
\end{figure}

\begin{figure}
\centering
\includegraphics[width=0.9\textwidth]{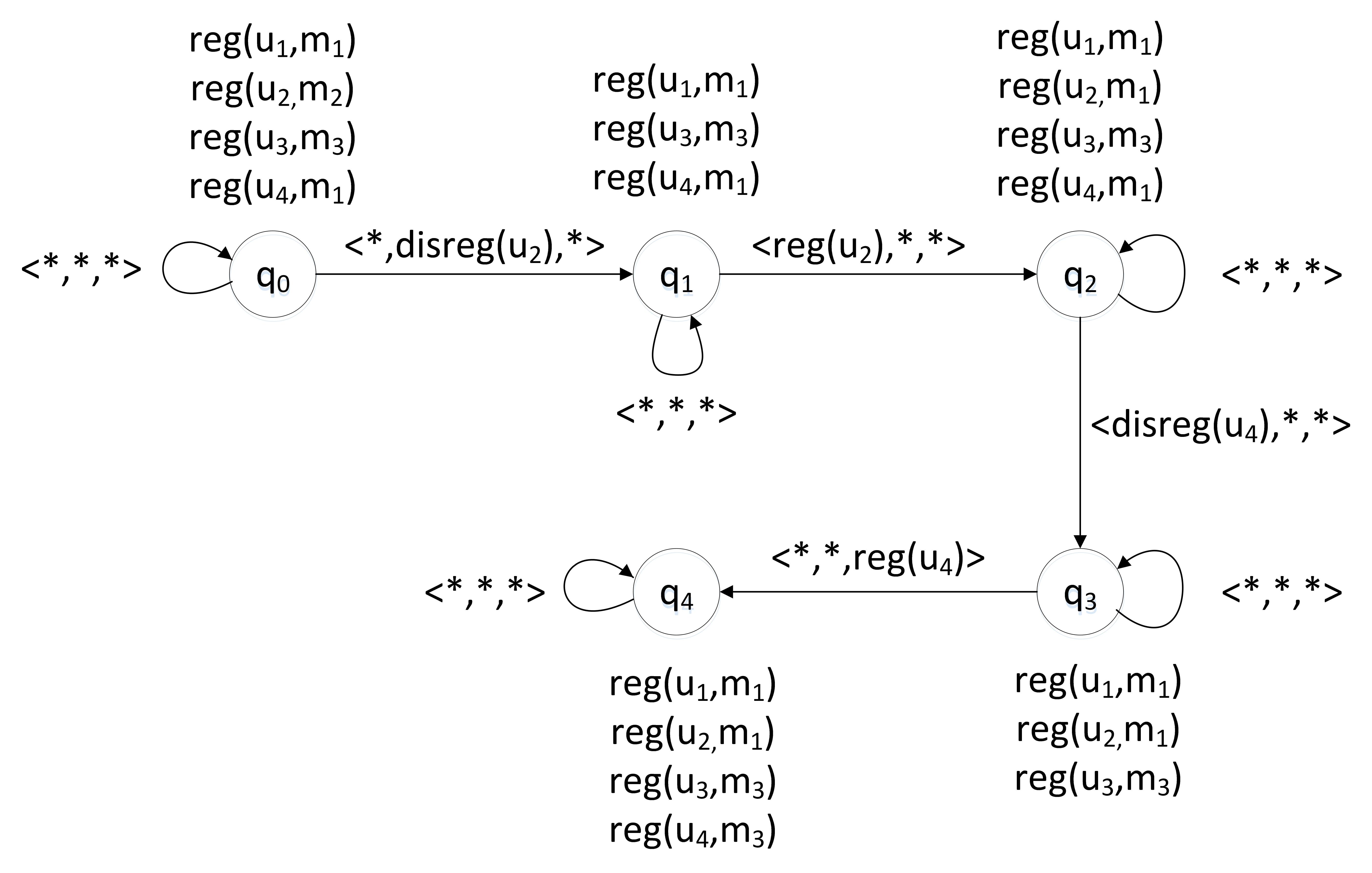}
\caption{State transitions of the system.}\label{figure_8}
\end{figure}

From Fig. \ref{figure_7}, we can decompose the system into four layers, which are all the user agens as layer 0, $\{m_2\}$ as layer 1, $\{m_1\}$ as layer 2 and $\{m_3\}$ as layer 3. We can see there are lots of independent components w.r.t. computation $\lambda^*(q_0)$. Because the contributions of user agents are not encoded in the system, we do not discuss the independent components that are formed only by user agents. Given a set of temporal formulas $\{\Diamond \com{\{u_1,u_2\}}{\{m_1\}}, \Diamond \com{\{u_3,u_4\}}{\{m_3\}}\}$, we verify which coalition of agents full contributes to any of them. Firstly, coalition $\{u_1,u_2,m_1,m_2\}$ has full contribution to $\Diamond \com{\{u_1,u_2\}}{\{m_1\}}$. This is because any agents in the coalition do not get input from agents outside the coalition, and they form the community of $\{u_1,u_2,m_1\}$, which can be expressed by
\[\mathcal{S},\Gamma,q_0 \models \langle u_1, u_2, m_1,m_2\rangle \Diamond \com{\{u_1,u_2\}}{\{m_1\}},\]
and apparently it cannot be achieved without any of them, which satisfies the minimization condition. Secondly, consider community $\com{\{u_3,u_4\}}{\{m_3\}}$. From Fig. \ref{figure_7}, we can see that coalition $\{u_3,u_4,m_3\}$ is not an independent component as it gets input from middle agent $m_1$. Finally, the whole system fully contributes to forming the two communities $\com{\{u_1,u_2\}}{\{m_1\}}$ and $\com{\{u_3,u_4\}}{\{m_3\}}\}$. 

Such information regarding agents' full contributions allows us to understand how user agents' interest are satisfied and how user communities are formed, which facilitates the development of the system. For example, since coalition $\{u_1,u_2,m_1,m_2\}$ has full contribution to $\Diamond \com{\{u_1,u_2\}}{\{m_1\}}$, the community will not change even if we change the interests of $u_3$ and $u_4$ or $m_3$ does not register $u_4$.

\section{Related Work}
In order to understand the complex link between local agent behavior and system level behavior, we need to study the \emph{independence} between local agent behavior in terms of ensuring properties. Coalition-related logic achieves that from different perspectives. For instance, alternating temporal logic (ATL) and game theory can be used to check whether a coalition of agents can enforce a state property regardless of what other agents do \cite{Knobbout2012Reasoning}. \cite{aagotnes2010robust} study whether a normative system remains effective even though some agents do not comply with it. \cite{wu2011framework} identifies two types of system properties that are unchangeable by restricting the joint behavior of a coalition. Apart from the semantics dimension, we have proposed the notion of independent components, a coalition of agents which only get input information from agents in the coalition. It is a structural dimension to represent a coalition of agents whose behavior is independent on the behavior of agents outside of the coalition.

If we see local rules as norms that are used to regulate the behavior of the systems, self-organizing multi-agent systems are actually a category of normative multi-agent systems. In preventative control systems \cite{shoham1993agent}, norms are represented as hard constraints where violations are impossible, but this does not respect agents' autonomy. Soft constraints are used in detective control systems where norms are possible to be violated but agents are motivated to follow the norms through sanctions or punishments \cite{brafman1996partially}. It is indeed more flexible than setting hard constraints in the system. However, agents coming from an open environment have their own personal situations such as knowledge and preferences, which might not be known to the designer of the system. In such a case, the system designer cannot identity which outcome is desired thus it is hard to identify legal computations to get there \cite{bulling2016norm}\cite{knobbout2016formal}. The norms that we consider in this paper are designed based on agents' communication. As the norm prescribes, agents are supposed to communicate with each other about their own situations and what agents need to do depends on the communication results. Such type of norms are in fact widely used in multi-agent systems because it allows agents to regulate the system by themselves \cite{wang2002self}\cite{valentini2014self}\cite{macarthur2011distributed}.    

The logic we use in this paper is inspired by ATL \cite{alur2002alternating}. ATL is usually used to reason about the strategies of participating agents \cite{bulling2010modelling}\cite{goranko2018game}: formula $\llangle A \rrangle \psi$ is read as coalition $A$ have a strategy or can collaborate to bring about formula $\psi$, which can be seen as the capacity of the participating agents to ensure a result no matter what other agents outside of the coalition do. A strategy is a function that takes a state (imperfect recall) or a sequence of states (perfect recall) as an argument and outputs an action, which can be seen as a plan. In order to make a plan for a coalition of agents to ensure a property, there is an external that is aware of the available actions from each agent and the game structure. In [Alur et al., 2002], the authors use a game between a protagonist and an antagonist to gain more intuition about ATL formulas. The ATL formula $\llangle A \rrangle \psi$ is satisfied at state $q$ iff the protagonist has a winning strategy to achieve $\psi$ in this game, which means that the protagonist is aware of the available actions from each agent and the game so that he can make a strategy (or a plan) for coalition $A$ to win the game. However, in a self-organizing multi-agent system, agents only have local view of the system, which is in this paper the state where they are, their own internal functions and the actions that are available to them and the local rules they need to follow, but not the internal functions of other agents and the actions that are available to other agents. Hence, there is no external to centrally make a plan for agents to follow. The logic in this paper is used to reason about the local rules of the participating agents: formula $\langle A \rangle \psi$ is read as coalition $A$ bring about formula $\psi$ by following their local rules no matter what other agents outside the coalition do. The use of logic and graph theory is new in the research of multi-agent systems, but it has appeared in the area of argumentation. An abstract argumentation framework can be represented as a directed graph (called a defeat graph) where nodes represent arguments and arcs represent attacks between them \cite{dung1995acceptability}.  \cite{liao2013efficient} use strongly connected components in graph theory to decompose an abstract argumentation framework for efficient computation of argumentation extensions. The decomposition approach in this paper is inspired by that, while we use it not for efficient computation but for system verification.

While we use modal logic to understand the complex link between local agent behavior and system level behavior, one could think that causal reasoning is a plausible alternative. Causal reasoning is the process of identifying the relationship between a cause and its effect \cite{pearl2009causality}. It indeed has been used to identify the causal relation between macro- and micro-variables in \cite{chalupka2016unsupervised} \cite{chalupka2017automated}. For self-organizing multi-agent systems, the local rules that agents follow are the cause and the global property they contribute is the effect. But sometimes it might be difficult to identify the causal relation because of the interactions between agents. In such cases, causal reasoning can still be combined with our graph-based layered approach to decompose the system, which is similar to the idea proposed by Judea Pearl and Thomas Verma to combine logic and directed graphs for causal reasoning \cite{pearl1987logic}.

\section{Conclusion}
Self-organization has been introduced to multi-agent systems as an internal control process or mechanism to solve difficult problems spontaneously. However, because a self-organizing multi-agent system has autonomous agents and local interactions between them, it is difficult to predict the global behavior of the system from the behavior of the agents we design deductively, making implementation the usual way to evaluate a self-organizing multi-agent system. Therefore, we believe that if we can understand how agents bring about the behavior of the system in the sense that which coalition contributes to which system property independently, the development of self-organizing multi-agent systems will be facilitated. In this paper, we propose a logic-based framework for self-organizing multi-agent systems, where abstract local rules are defined in the way that agents make their next moves based on their communication with other agents. Such a framework allows us to verify properties of the system without doing implementation. A structural property called independent components is introduced to represent a coalition of agents which do not get input from agents outside the coalition. The dependence relation between coalitions of agents regarding their contributions to the global behavior of the system is reasoned about from the structural and semantic perspectives. We then show model-checking a formula in our language remains close to the complexity of model-checking standard ATL, while model-checking whether a coalition of agents has full contribution to a temporal property is in exponential time. Moreover, we combine our framework with graph theory to decompose a system into different coalitions located in different layers. The resulting information about agents' full contributions allows us to understand the complex link between local agent behavior and system level behavior in a self-organizing multi-agent system. We finally show how we can use our framework to model a constraint satisfaction problem, where a solution based on self-organization is used. Certainly, there may be some possible limitations in this study: for example, we only consider communication as the interaction between agents, which does not capture all types of interaction in the system. Moreover, the dependence graph with respect to computation $\lambda^*(q)$ is determined by agents' communication required by local rules. Thus, the internal function $m_i$ of each agent play an important role in the global system behavior. In the future, it will be interesting to investigate the robustness of self-organizing multi-agent systems due to the change of internal functions, which is possible to happen when the system is deployed in an open environment and thus agents can join or exit the system as they want.

\bibliography{sigproc}
\end{document}